\documentclass{article}

\PassOptionsToPackage{numbers, compress}{natbib}


\usepackage[final]{neurips_2020}


\usepackage[utf8]{inputenc} 
\usepackage[T1]{fontenc}    
\usepackage[hidelinks]{hyperref}       
\usepackage{url}            
\usepackage{booktabs}       
\usepackage{amsfonts}       
\usepackage{nicefrac}       
\usepackage{microtype}      

\bibliographystyle{unsrtnat}
\setcitestyle{open={[},close={]}}

\usepackage{xcolor}

\usepackage{algorithm,algorithmic}
\usepackage{amsmath,amsthm,amssymb,amsfonts,bm,mathtools,braket,units,array,comment,cases,bbm}
\usepackage{accents}
\usepackage{centernot}
\usepackage{filecontents}
\usepackage{color}
\usepackage{enumitem}
\usepackage{tikz}
\usetikzlibrary{matrix}
\usetikzlibrary{shapes.geometric}
\usetikzlibrary{arrows.meta}
\usetikzlibrary{positioning,shapes,arrows}
\usetikzlibrary{calc}
\usetikzlibrary{shapes.multipart}
\usetikzlibrary{patterns}
\usetikzlibrary{bayesnet}
\usepackage{pgfplots}
\pgfplotsset{width=7cm,compat=1.8}

\def\CausalEffect{\mathcal{C}}
\def\DataFidelity{\mathcal{D}}
\def\E{\mathbb{E}}

\def\R{\mathbb{R}}
\def\p{p}

\def\Xhat{\widehat{X}}
\newcommand{\norm}[1]{\left\|#1\right\|}
\newcommand{\ip}[2]{\langle #1, #2 \rangle}
\newcommand{\abs}[1]{\left|#1\right|}
\newcommand{\argmin}{\mathop{\mathrm{argmin}}}

\newtheorem{theorem}{Theorem}
\newtheorem{lemma}[theorem]{Lemma}
\newtheorem{proposition}[theorem]{Proposition}

\newtheorem{definition}[theorem]{Definition}

\definecolor{goldenpoppy}{rgb}{1.0,0.761,0.039}
\definecolor{brightnavyblue}{rgb}{0.047,0.482,0.863}
\definecolor{rossocorsa}{rgb}{0.816,0.000,0.000}

\title{Generative causal explanations \\of black-box classifiers}

%

\author{%
  Matthew O'Shaughnessy, Gregory Canal, Marissa Connor, \\ \textbf{Mark Davenport, and Christopher Rozell} \\
  School of Electrical \& Computer Engineering\\
  Georgia Institute of Technology
}


\begin{document}

\maketitle

\begin{abstract}
We develop a method for generating causal post-hoc explanations of black-box classifiers based on a learned low-dimensional representation of the data. The explanation is causal in the sense that changing learned latent factors produces a change in the classifier output statistics. To construct these explanations, we design a learning framework that leverages a generative model and information-theoretic measures of causal influence. Our objective function encourages both the generative model to faithfully represent the data distribution and the latent factors to have a large causal influence on the classifier output. Our method learns both global and local explanations, is compatible with any classifier that admits class probabilities and a gradient, and does not require labeled attributes or knowledge of causal structure. Using carefully controlled test cases, we provide intuition that illuminates the function of our objective. We then demonstrate the practical utility of our method on image recognition tasks.\footnote{Code is available at \texttt{https://github.com/siplab-gt/generative-causal-explanations}.}
\end{abstract}

\section{Introduction}
\label{sec:introduction}
There is a growing consensus among researchers, ethicists, and the public that machine learning models deployed in sensitive applications should be able to \emph{explain} their decisions \cite{doshi-velez2017role,kroll2017accountable}. A powerful way to make ``explain'' mathematically precise is to use the language of causality: explanations should identify \emph{causal} relationships between certain data aspects --- features which may or may not be semantically meaningful --- and the classifier output \cite{miller2019explanation,pearl2019seven,moraffah2020causal}. In this conception, an aspect of the data helps explain the classifier if changing that aspect (while holding other data aspects fixed) produces a corresponding change in the classifier output.

Constructing causal explanations requires reasoning about how changing different aspects of the input data affects the classifier output, but these observed changes are only meaningful if the modified combination of aspects occurs naturally in the dataset. A challenge in constructing causal explanations is therefore the ability to change certain aspects of data samples without leaving the data distribution. In this paper we propose a novel learning-based framework that overcomes this challenge. Our framework has two fundamental components that we argue are necessary to operationalize a causal explanation: a method to \emph{represent and move within the data distribution}, and a \emph{rigorous metric for causal influence} of different data aspects on the classifier output.

\begin{figure}
    \centering
    \begin{tabular}{ >{\centering\arraybackslash} m{3.7in} >{\centering\arraybackslash} m{0.05in} >{\centering\arraybackslash} m{1.3in} }
        \includegraphics{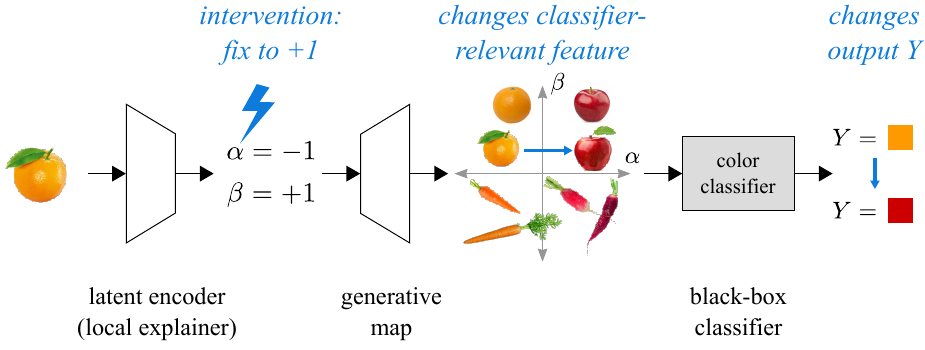} & ~~~ &
        \begin{tikzpicture}
            \node[latent] (X) {$X$};
            \node[latent, left=of X, xshift=1.2em, yshift=1.6em] (alpha) {$\alpha$};
            \node[latent, left=of X, xshift=1.2em, yshift=-1.6em] (beta) {$\beta$};
    	    \node[latent, right=of X, xshift=-1.2em] (Y) {$Y$};
    	    \edge {alpha} {X};
    	    \edge {beta} {X};
    	    \edge {X} {Y};
    	\end{tikzpicture}
	    \\
	    (a) & & (b)
	\end{tabular}
	\caption{(a) Computational architecture used to learn explanations. Here, the low-dimensional representation $(\alpha,\beta)$ learns to describe the color and shape of inputs. Changing $\alpha$ (color) changes the output of the classifier, which detects the color of the data sample, while changing $\beta$ (shape) does not affect the classifier output. (b) DAG describing our causal model, satisfying principles in Section \ref{sec:methods/model}.}
    \label{fig:graphical-abstract-dag}
\end{figure}

To do this, we construct a generative model consisting of a disentangled representation of the data and a generative mapping from this representation to the data space (Figure \ref{fig:graphical-abstract-dag}(a)). We seek to learn this disentangled representation in such a way that each factor controls a different aspect of the data, and a subset of the factors have a large causal influence on the classifier output. To formalize this notion of causal influence, we define a structural causal model (SCM) \cite{pearl2009causality} that relates independent latent factors defining data aspects, the classifier inputs, and the classifier outputs. Leveraging recent work on information-theoretic measures of causal influence \cite{ay2008information,janzing2013quantifying}, we use the independence of latent factors in the SCM to show that in our framework the causal influence of the latent factors on the classifier output can be quantified simply using mutual information. The crux of our approach is an optimization program for learning a mapping from the latent factors to the data space. The objective ensures that the learned disentangled representation represents the data distribution while simultaneously encouraging a subset of latent factors to have a large causal influence on the classifier output.

A natural benefit of our framework is that the learned disentangled representation provides a rich and flexible vocabulary for explanation. This vocabulary can be more expressive than feature selection or saliency map-based explanation methods: a latent factor, in its simplest form, could describe a single feature or mask of features in input space, but it can also describe much more complex patterns and relationships in the data. Crucially, unlike methods that crudely remove features directly in data space, the generative model enables us to construct explanations that respect the data distribution. This is important because an explanation is only meaningful if it describes combinations of data aspects that naturally occur in the dataset. For example, a loan applicant would not appreciate being told that his loan would have been approved if he had made a negative number of late payments, and a doctor would be displeased to learn that her automated diagnosis system depends on a biologically implausible attribute.

Once the disentangled representation is learned, explanations can be constructed using the generative mapping. Our framework can provide both global and local explanations: a practitioner can understand the aspects of the data that are important to the classifier at large by visualizing the effect in data space of changing each causal factor, and they can determine the aspects that dictated the classifier output for a specific input by observing its corresponding latent values. These visualizations can be much more descriptive than saliency maps, particularly in vision applications.

The major contributions of this work are a new conceptual framework for generating explanations using causal modeling and a generative model (Section \ref{sec:methods}), analysis of the framework in a simple setting where we can obtain analytical and intuitive understanding (Section \ref{sec:lingauss}), and a brief evaluation of our method applied to explaining image recognition models (Section \ref{sec:vae}).

\section{Related work}
\label{sec:related-work}
We focus on methods that generate \emph{post-hoc} explanations of black-box classifiers. While post-hoc explanations are typically categorized as either global (explaining the entire classifier mechanism) or local (explaining the classification of a particular datapoint) \cite{guidotti2018survey}, our framework joins a smaller group of methods that globally learn a model that can be then used to generate local explanations \cite{dabkowski2017real,chen2018learning,bang2019explaining,schwab2019cxplain}.

\textbf{Forms of explanation.} Post-hoc explanations come in varying forms. Some methods learn an interpretable model such as a decision tree that \emph{approximates the black-box} either globally \cite{craven1996extracting,bastani2017interpretability,guo2018explaining} or locally \cite{ribeiro2016why,lakkaraju2017interpretable,kim2019learning,wagner2019interpretable}. A larger class of methods create local explanations directly in the data space, performing \emph{feature selection} or creating \emph{saliency maps} using classifier gradients \cite{simonyan2013deep,bach2015pixelwise,shrikumar2017learning,sundararajan2017axiomatic,lundberg2017unified} or by training a new model \cite{dabkowski2017real}. A third category of methods generate \emph{counterfactual data points} that describe how inputs would need to be altered to produce a different classifier output \cite{ribeiro2018anchors,white2019providing,zhang2018interpreting,wachter2018counterfactual,carter2019what,mothilal2020explaining,vanlooveren2020interpretable}. Other techniques identify the \emph{points in the training set} most responsible for a particular classifier output \cite{koh2017understanding,khanna2019interpreting}. Our framework belongs to a separate class of methods whose explanations consist of a low-dimensional set of \emph{latent factors} that describe different aspects (or ``concepts'') of the data. These latent factors form a rich and flexible vocabulary for both global and local explanations, and provide a means to represent the data distribution. Unlike some methods that learn concepts using labeled attributes \cite{kim2018interpretability,parafita2019explaining}, we do not require side information defining data aspects; rather, we visualize the learned aspects using a generative mapping to the data space as in \cite{li2018deep,harradon2018causal,alvarezmelis2018robust}. This type of latent factor explanation has also been used in the construction of self-explaining neural networks \cite{li2018deep,al-shedivat2018contextual}.

\textbf{Causality in explanation.} Because explanation methods seek to answer ``why'' and ``how'' questions that use the language of cause and effect \cite{miller2019explanation,pearl2019seven}, causal reasoning has played an increasingly important role in designing explanation frameworks \cite{moraffah2020causal}. (For similar reasons, causality has played a prominent part in designing metrics for fairness in machine learning \cite{kusner2017counterfactual,kilbertus2017avoiding, zhang2018fairness, zhang2018equality, wu2019pc}.) Prior work has quantified the impact of features in data space by using Granger causality \cite{schwab2019cxplain}, a priori known causal structure \cite{frye2019asymmetric,parafita2019explaining}, an average or individual causal effect metric \cite{chattopadhyay2019neural,kim2019learning}, or by applying random valued-interventions \cite{datta2016algorithmic}. Other work generates causal explanations by performing interventions in different network layers \cite{narendra2018explaining}, using latent factors built into a modified network architecture \cite{harradon2018causal}, or using labeled examples of human-interpretable latent factors \cite{goyal2020explaining}.

Generative models have been used to compute interventions that respect the data distribution \cite{alvarez-melis2017causal,parafita2019explaining,kim2019learning,chang2019explaining}, a key idea in this paper. Our work, however, is most similar to methods using generative models whose explanations use notions of causality and are constructed directly from latent factors. \citeauthor{goyal2020explaining} compute the average causal effect (ACE) of human-interpretable concepts on the classifier \cite{goyal2020explaining}, but require labeled examples of the concepts and suffer from limitations of the ACE metric \cite{janzing2013quantifying}. \citeauthor{harradon2018causal} construct explanations based on latent factors, but these explanations are specific to neural network classifiers and require knowledge of the classifier network architecture \cite{harradon2018causal}. Our method is unique in constructing a framework from principles of causality that generates latent factor-based explanations of black-box classifiers without requiring side information.

\paragraph{Disentanglement perspective.} Our method can also be interpreted as a \emph{disentanglement} procedure \cite{bengio2013representation,higgins2018definition} supervised by classifier output probabilities. Unlike work that encourages a one-to-one correspondence between individual latent factors and semantically meaningful features (i.e., ``data generating factors''), we aim to separate the latent factors that are relevant to the classifier's decision from those that are irrelevant. We outline connections to this literature in more detail in Section \ref{sec:methods/disentanglement}.

\section{Methods}
\label{sec:methods}
Our goal is to explain a black-box classifier $f \colon \mathcal{X} \to \mathcal{Y}$ that takes data samples $X \in \mathcal{X}$ and assigns a probability to each class $Y \in \{1, \dots, M\}$ (i.e., $\mathcal{Y}$ is the $M$-dimensional probability simplex). We assume that the classifier also provides the gradient of each class probability with respect to the classifier input.

Our explanations take the form of a low-dimensional and independent set of ``causal factors'' $\alpha \in \R^K$ that, when changed, produce a corresponding change in the classifier output statistics. We also allow for additional independent latent factors $\beta \in \R^L$ that contribute to representing the data distribution but need not have a causal influence on the classifier output. Together, $(\alpha,\beta)$ constitute a low-dimensional representation of the data distribution $p(X)$ through the generative mapping $g \colon \R^{K+L} \to \mathcal{X}$. The generative mapping is learned so that the explanatory factors $\alpha$ have a large causal influence on $Y$, while $\alpha$ and $\beta$ together faithfully represent the data distribution (i.e., $\p(g(\alpha,\beta)) \approx \p(X)$). The $\alpha$ learned in this manner can be interpreted as aspects \emph{causing} $f$ to make classification decisions \cite{pearl2009causality}.

To learn a generative mapping with these characteristics, we need to define (i) a model of the causal relationship between $\alpha$, $\beta$, $X$, and $Y$, (ii) a metric to quantify the causal influence of $\alpha$ on $Y$, and (iii) a learning framework that maximizes this influence while ensuring that $\p(g(\alpha,\beta)) \approx \p(X)$.

\subsection{Causal model}
\label{sec:methods/model}

We first define a directed acyclic graph (DAG) describing the relationship between $(\alpha,\beta)$, $X$, and $Y$, which will allow us to derive a metric of causal influence of $\alpha$ on $Y$. We propose the following principles for selecting this DAG:
\vspace{-0.05in}
\begin{enumerate}[leftmargin=2em]
    \item[\textbf{(1)}] \textbf{The DAG should describe the functional (causal) structure of the data, not simply the statistical (correlative) structure.} This principle allows us to interpret the DAG as a structural causal model (SCM) \cite{pearl2009causality} and interpret our explanations causally.
    \item[\textbf{(2)}] \textbf{The explanation should be derived from the classifier output $Y$, not the ground truth classes.} This principle affirms that we seek to understand the action of the classifier, not the ground truth classes.
    \item[\textbf{(3)}] \textbf{The DAG should contain a (potentially indirect) causal link from $X$ to $Y$.} This principle ensures that our causal model adheres to the functional operation of $f \colon X \to Y$.
\end{enumerate}
\vspace{-0.05in}

Based on these principles, we adopt the DAG shown in Figure \ref{fig:graphical-abstract-dag}(b). Note that the difference in the roles played by $\alpha$ and $\beta$ is subtle and not apparent from the DAG alone: the difference arises from the fact that the functional relationship defining the causal connection $X \to Y$ is $f$, which by construction uses only features of $X$ that are controlled by $\alpha$. In other words, interventions on both $\alpha$ and $\beta$ produce changes in $X$, but only interventions on $\alpha$ produce changes in $Y$. A key feature of this DAG is that the latent factors $(\alpha,\beta)$ are independent, which we enforce with an isotropic prior when learning the generative mapping. This independence improves the parsimony and interpretability of the learned disentangled representation (see Appendix \ref{sec:supp/causal-obj-variants}). It also results in our metric for causal influence simplifying to mutual information. Importantly, unlike methods that \emph{assume} independence of features in data space (e.g., \cite{datta2016algorithmic,ribeiro2016why,shrikumar2017learning,lundberg2017unified}), our framework \emph{intentionally learns} independent latent factors.

\subsection{Metric for causal influence}
\label{sec:methods/causal-influence}

We now derive a metric $\CausalEffect(\alpha,Y)$ for the causal influence of $\alpha$ on $Y$ using the DAG in Figure \ref{fig:graphical-abstract-dag}(b). A satisfactory measure of causal influence in our application should satisfy the following principles:
\vspace{-0.05in}
\begin{enumerate}[leftmargin=2em]
    \item[\textbf{(1)}] \textbf{The metric should completely capture functional dependencies.} This principle allows us to capture the complete causal influence of $\alpha$ on $Y$ through the generative mapping $g$ and classifier $f$, which may both be defined by complex and nonlinear functions such as neural networks.
    \item[\textbf{(2)}] \textbf{The metric should quantify indirect causal relationships between variables.} This principle allows us to quantify the indirect causal relationship between $\alpha$ and $Y$.
\end{enumerate}
\vspace{-0.05in}
Principle 1 eliminates common metrics such as the average causal effect (ACE) \cite{holland1988causal} and analysis of variance (ANOVA) \cite{lewontin1974analysis}, which capture only causal relationships between first- and second-order statistics, respectively \cite{janzing2013quantifying}. Recent work has overcome these limitations by using information-theoretic measures \cite{ay2008information,janzing2013quantifying,schamberg2018quantifying}. Of these, we select the \emph{information flow} measure of \cite{ay2008information} to satisfy Principle 2 because it is node-based, naturally accommodating our goal of quantifying the causal influence of $\alpha$ on $Y$.

The information flow metric adapts the concept of mutual information typically used to quantify \emph{statistical} influence to quantify \emph{causal} influence by the observational distributions in the standard definition of conditional mutual information with interventional distributions:
\begin{definition}[Ay and Polani 2008 \cite{ay2008information}]
    Let $U$ and $V$ be disjoint subsets of nodes. The \emph{information flow from $U$ to $V$} is
    \begin{equation}
        I(U \to V) \coloneqq \int_U p(u) \int_V p(v \mid do(u)) \log \frac{p(v \mid do(u))}{\int_{u'} p(u') p(v \mid do(u')) du'} dV dU,
        \label{eq:information-flow}
    \end{equation}
    where $do(u)$ represents an intervention in a causal model that fixes $u$ to a specified value regardless of the values of its parents \cite{pearl2009causality}.
\end{definition}

The independence of $(\alpha,\beta)$ makes it simple to show that information flow and mutual information coincide in our DAG:
\begin{proposition}[Information flow in our DAG]
    \label{prop:information-flow}
    The information flow from $\alpha$ to $Y$ in the DAG of Figure \ref{fig:graphical-abstract-dag}(b) coincides with the mutual information between $\alpha$ and $Y$. That is, $I(\alpha \to Y) = I(\alpha ; Y)$, where mutual information is defined as $I(\alpha ; Y) = \E_{\alpha,Y} \left[ \log \frac{p(\alpha,Y)}{p(\alpha) p(Y)} \right]$.
\end{proposition}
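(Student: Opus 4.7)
The plan is to show that $p(y \mid \rmdo(\alpha)) = p(y \mid \alpha)$ in our DAG, which reduces the information flow integral to the standard conditional-form of mutual information. The intuition is the textbook fact that intervening on a root node coincides with conditioning on it; the structural assumption doing the work is that $\alpha$ has no parents and is independent of $\beta$.

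First, I would apply the truncated factorization to the DAG in Figure \ref{fig:graphical-abstract-dag}(b). The observational joint factorizes as
\begin{equation*}
    p(\alpha,\beta,x,y) = p(\alpha)\,p(\beta)\,p(x \mid \alpha,\beta)\,p(y \mid x),
\end{equation*}
and the $\rmdo(\alpha)$ operation removes the $p(\alpha)$ factor and fixes $\alpha$ to the intervened value. Marginalizing out $\beta$ and $x$ gives
\begin{equation*}
    p(y \mid \rmdo(\alpha)) = \int \! \int p(\beta)\,p(x \mid \alpha,\beta)\,p(y \mid x)\, d\beta\, dx.
\end{equation*}

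Next, I would compute the observational conditional $p(y \mid \alpha)$ by the chain rule and compare. Using the d-separation $Y \perp \alpha \mid X$ implied by the DAG, we have $p(y \mid x,\alpha) = p(y \mid x)$; using the prior independence $\alpha \perp \beta$, we have $p(\beta \mid \alpha) = p(\beta)$. Combining these,
\begin{equation*}
    p(y \mid \alpha) = \int p(y \mid x)\,p(x \mid \alpha)\, dx = \int \! \int p(y \mid x)\,p(x \mid \alpha,\beta)\,p(\beta)\, d\beta\, dx,
\end{equation*}
which matches the interventional expression, so $p(y \mid \rmdo(\alpha)) = p(y \mid \alpha)$. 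The same reasoning yields $\int p(\alpha')\,p(y \mid \rmdo(\alpha'))\, d\alpha' = \int p(\alpha')\,p(y \mid \alpha')\, d\alpha' = p(y)$ for the denominator inside the logarithm.

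Finally, I would substitute these two identities into Definition~\ref{eq:information-flow}:
\begin{equation*}
    I(\alpha \to Y) = \int p(\alpha) \int p(y \mid \alpha)\log\frac{p(y \mid \alpha)}{p(y)}\, dy\, d\alpha = \E_{\alpha,Y}\!\left[\log\frac{p(\alpha,Y)}{p(\alpha)p(Y)}\right] = I(\alpha;Y),
\end{equation*}
completing the proof. There is no real obstacle here: the whole content of the proposition is the root-node identity $p(y \mid \rmdo(\alpha)) = p(y \mid \alpha)$, and the mildly subtle point is simply to make sure both required conditional independencies ($Y \perp \alpha \mid X$ and $\alpha \perp \beta$) are read off correctly from the DAG before substitution.
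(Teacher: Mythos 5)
Your proof is correct, and it establishes the same key identity as the paper --- $p(y \mid \rmdo(\alpha)) = p(y \mid \alpha)$ --- but by a somewhat different route. The paper's proof (Appendix C.1) actually proves a more general statement first: that the information flow from $\alpha$ to $Y$ \emph{imposing} $\beta$ equals the conditional mutual information $I(\alpha; Y \mid \beta)$, obtaining the needed interventional-to-observational conversions by citing the action/observation exchange rule of the do-calculus \cite[Thm.~3.4.1]{pearl2009causality} and checking the relevant d-separations in the mutilated graphs (e.g., $Y \perp \alpha$ in $\mathcal{G}_{\underline{\alpha}}$); Proposition \ref{prop:information-flow} is then recovered by imposing the empty set. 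You instead derive the root-node identity from first principles via the truncated factorization, writing out both $p(y \mid \rmdo(\alpha))$ and $p(y \mid \alpha)$ as explicit integrals over $\beta$ and $x$ and matching them using $Y \perp \alpha \mid X$ and $\alpha \perp \beta$. Your approach is more self-contained and elementary (no appeal to the do-calculus rules as a black box), while the paper's buys the generalization to the ``imposing $\beta$'' variant, which it needs later for the conditional objective variants in Appendix A. Both arguments are sound; you correctly identify the two conditional independencies doing the work, and your treatment of the denominator $\int p(\alpha')\,p(y \mid \rmdo(\alpha'))\,d\alpha' = p(y)$ closes the remaining gap in the substitution into \eqref{eq:information-flow}.
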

The proof, which follows easily from the rules of do-calculus \cite[Thm.~3.4.1]{pearl2009causality}, is provided in Appendix \ref{sec:supp/proofs/information-flow}. Based on this result, we use
\begin{equation}
    \label{eq:causal-effect}
    \CausalEffect(\alpha,Y) = I(\alpha ; Y)
\end{equation}
to quantify the causal influence of $\alpha$ on $Y$. This metric, derived in our work from principles of causality using the DAG in Figure \ref{fig:graphical-abstract-dag}(b), has also been used to select informative features in other work on explanation \cite{gao2016variational,chen2018learning,al-shedivat2018contextual,kanehira2019learning,chang2019game,adel2018discovering}. Our framework, then, generates explanations that benefit from both causal and information-theoretic perspectives. Note, however, that the validity of the causal interpretation is predicated on our modeling decisions; mutual information is in general a correlational, not causal, metric.

Other variants of (conditional) mutual information are also compatible with our development. These variants retain causal interpretations, but produce explanations of a slightly different character. For example, $\sum_{i=1}^K I(\alpha_i; Y)$ and $I(\alpha ; Y \mid \beta)$ (the latter corresponding to the information flow of $\alpha$ on $Y$ when ``imposing'' $\beta$ in \cite{ay2008information}) encourage interactions between the explanatory features to generate $X$. These variants are described and analyzed in more detail in Appendices \ref{sec:supp/causal-obj-variants} and \ref{sec:supp/lingauss-details}.


\subsection{Optimization framework}
\label{sec:methods/objective}

We now turn to our goal of learning a generative mapping $g \colon (\alpha,\beta) \to X$ such that $p(g(\alpha,\beta)) \approx p(X)$, the $(\alpha,\beta)$ are independent, and $\alpha$ has a large causal influence on $Y$. We do so by solving
\begin{equation}
    \label{eq:objective}
    \underset{g \in G}{\arg\max}\quad \CausalEffect(\alpha, Y) + \lambda \cdot \DataFidelity\left(\p(g(\alpha,\beta)), \p(X)\right),
\end{equation}
where $g$ is a function in some class $G$, $\CausalEffect(\alpha,Y)$ is our metric for the causal influence of $\alpha$ on $Y$ from \eqref{eq:causal-effect}, and $\DataFidelity(p(g(\alpha,\beta)),p(X))$ is a measure of the similarity between $p(g(\alpha,\beta))$ and $p(X)$.

The use of $\DataFidelity$ is a crucial feature of our framework because it forces $g$ to produce samples that are in the data distribution $p(X)$. Without this property, the learned causal factors could specify combinations of aspects that do not occur in the dataset, providing little value for explanation. The specific form of $\DataFidelity$ is dependent on the class of decoder models $G$. In this paper we focus on two specific instantiations of $G$. Section \ref{sec:lingauss} takes $G$ to be the set of linear mappings with Gaussian additive noise, using negative KL divergence for $\DataFidelity$. This setting allows us to provide more rigorous intuition for our model. Section \ref{sec:vae} adopts the variational autoencoder (VAE) framework shown in Figure \ref{fig:graphical-abstract-dag}(a), parameterizing $G$ by a neural network and using a variational lower bound \cite{kingma2014autoencoding} as $\DataFidelity$.

\subsection{Training procedure}
\label{sec:methods/training-procedure}

In practice, we maximize the objective \eqref{eq:objective} using Adam \cite{kingma2017adam}, computing a sample-based estimate of $\CausalEffect$ at each iteration. The sampling procedure is detailed in Appendix \ref{sec:supp/sampling}. Training our causal explanatory model requires selecting $K$ and $L$, which define the number of latent factors, and $\lambda$, which trades between causal influence and data fidelity in our objective. A proper selection of these parameters should set $\lambda$ sufficiently large so that the distributions $p(X \mid \alpha, \beta)$ used to visualize explanations lie in the data distribution $p(X)$, but not so high that the causal influence term is overwhelmed.

\begin{algorithm}[t]
\caption{Principled procedure for selecting $(K,L,\lambda)$.}
\label{alg:parameter-selection}
\begin{algorithmic}[1]
    \STATE Initialize $K, L, \lambda = 0$. Optimizing only $\DataFidelity$, increase $L$ until objective plateaus.
    \STATE \textbf{repeat}~ increment $K$ and decrement $L$. Increase $\lambda$ until $\DataFidelity$ approaches value from Step 1.
    \STATE \textbf{until}~ {$\CausalEffect$ reaches plateau. Use $(K,L,\lambda)$ from immediately before plateau was reached.}
\end{algorithmic}
\end{algorithm}

To properly navigate this trade-off it is instructive to view \eqref{eq:objective} as a constrained problem \cite{boyd2004convex} in which $\CausalEffect$ is maximized subject to an upper bound on $\DataFidelity$. Algorithm \ref{alg:parameter-selection} provides a principled method for parameter selection based on this idea. Step 1 selects the total number of latent factors needed to adequately represent $p(X)$ using only noncausal factors. Steps 2-3 then incrementally convert noncausal factors into causal factors until the total explanatory value of the causal factors (quantified by $\CausalEffect$) plateaus. Because changing $K$ and $L$ affects the relative weights of the causal influence and data fidelity terms, $\lambda$ should be increased after each increment to ensure that the learned representation continues to satisfy the data fidelity constraint.

\subsection{Disentanglement perspective}
\label{sec:methods/disentanglement}

Disentanglement procedures seek to learn low-dimensional data representations in which latent factors correspond to data aspects that concisely and independently describe high dimensional data \cite{bengio2013representation,higgins2018definition}. Although some techniques perform unsupervised disentanglement \cite{higgins2017betavae,chen2016infogan,kim2018disentangling}, it is common to use side information as a supervisory signal.

Because our goal is explanation, our main objective is to separate classifier-relevant and classifier-irrelevant aspects. Our framework can be thought of as a disentanglement procedure with two distinguishing features:

First, we use classifier probabilities to aid disentanglement. This is similar in spirit to disentanglement methods that incorporate grouping or class labels as side information by modifying the VAE training procedure \cite{kulkarni2015deep}, probability model \cite{bouchacourt2018multilevel}, or loss function \cite{ridgeway2018learning}. Although these methods could be adapted for explanation using classifier-based groupings, our method intelligently uses classifier \emph{probabilities} and gradients.

Second, we develop our framework from a causal perspective. Suter et al.\ also develop a disentanglement procedure from principles of causality \cite{suter2019robustly}, casting the disentanglement task as learning latent factors that correspond to parent-less causes in the generative structural causal model. Unlike this framework, we assume that the latent factors are independent based on properties of the VAE evidence lower bound. We then use this fact to show that the commonly-used MI metric measures \emph{causal} influence of $\alpha$ on $Y$ using the information flow metric of \cite{ay2008information}. 

This provides a causal interpretation for information-based disentanglement methods such as InfoGAN \cite{chen2016infogan} (which adds a term similar to $I(\alpha;X)$ to the VAE objective). Encouragement of independence in latent factors plays an important role in much work on disentanglement (e.g., \cite{higgins2017betavae,chen2016infogan,chen2018isolating}); priors that better encourage independence could be applied in our framework to increase the validity of our proposed causal graph.

\section{Analysis with linear-Gaussian generative map}
\label{sec:lingauss}
We first consider the instructive setting in which a linear generative mapping is used to explain simple classifiers with decision boundaries defined by hyperplanes. This setting admits geometric intuition and basic analysis that illuminates the function of our objective.

In this section we define the data distribution as isotropic normal in $\R^N$, $X \sim \mathcal{N}(0,I)$ (but note that elsewhere in the paper we make no assumptions on the data distribution). Let $(\alpha,\beta) \sim \mathcal{N}(0,I)$, and consider the following generative model to be used for constructing explanations:
\begin{equation*}
    g(\alpha,\beta) = \begin{bmatrix} W_{\alpha} & W_{\beta} \end{bmatrix} \begin{bmatrix} \alpha \\ \beta \end{bmatrix} + \varepsilon,
\end{equation*}
where $W_{\alpha} \in \R^{N \times K}$, $W_{\beta} \in \R^{N \times L}$, and $\varepsilon \sim \mathcal{N}(0,\gamma I)$. We illustrate the behavior of our method applied with this generative model on two simple binary classifiers ($Y \in \{0,1\}$).

\begin{figure}
    \centering
    \begin{tabular}{ccccc}

\begin{tikzpicture}[scale=0.25,shading angle from/.style args={line from #1 to #2}{insert path={let \p1=($#2-#1$),\n1={atan2(\y1,\x1)} in},shading angle=\n1}]
    \pgfmathsetmacro{\THETACLASS}{0} 
    \pgfmathsetmacro{\THETAALPHA}{30} 
    \pgfmathsetmacro{\THETABETA}{135} 
    \fill[draw=none, fill=brightnavyblue, opacity=0.4] (-4.5,-4.5) -- (-4.5,4.5) -- (0,4.5) -- (0,-4.5) -- cycle;
    \fill[draw=none, fill=goldenpoppy, opacity=0.4] (0,4.5) -- (4.5,4.5) -- (4.5,-4.5) -- (0,-4.5) -- cycle;
    \draw[help lines, color=gray!30] (-4.9,-4.9) grid (4.9,4.9);
    \draw[<->,thick,color=gray!65] (-5,0)--(5,0) node[right]{};
    \draw[<->,thick,color=gray!65] (0,-5)--(0,5) node[above]{};
    \draw[->,thick] (0,0)--({2.5*cos(\THETACLASS+\THETAALPHA)},{2.5*sin(\THETACLASS+\THETAALPHA)}) node[above right, xshift=-0.3em, yshift=-0.3em]{\scalebox{1.0}{$w_\alpha$}};
    \draw[->,thick] (0,0)--({2.5*cos(\THETACLASS+\THETABETA)},{2.5*sin(\THETACLASS+\THETABETA)}) node[above left, xshift=0.3em]{\scalebox{1.0}{$w_\beta$}};
    \draw[->,thick] (0,0)--(2.5,0) node[right, xshift=-0.4em, yshift=-0.4em]{\scalebox{1.0}{$a$}};
    \foreach \x in {-3,-1.5,0,1.5,3}
        {\draw[color=gray, rotate around={\THETACLASS+\THETABETA:({\x*cos(\THETACLASS+\THETAALPHA)},{\x*sin(\THETACLASS+\THETAALPHA)})}] ({\x*cos(\THETACLASS+\THETAALPHA)},{\x*sin(\THETACLASS+\THETAALPHA)}) ellipse (2.5 and 1.25);}
    \draw[{Latex[length=1mm,width=1mm]}-] (2.1,-2.0) -- (2.8,-3) node[below, yshift=3]{\scalebox{1.0}{$p(\Xhat \mid \alpha)$}};
\end{tikzpicture} & 

\begin{tikzpicture}[scale=0.25,shading angle from/.style args={line from #1 to #2}{insert path={let \p1=($#2-#1$),\n1={atan2(\y1,\x1)} in},shading angle=\n1}]
    \pgfmathsetmacro{\THETACLASS}{0} 
    \pgfmathsetmacro{\THETAALPHA}{0} 
    \pgfmathsetmacro{\THETABETA}{90} 
    \fill[draw=none, fill=brightnavyblue, opacity=0.4] (-4.5,-4.5) -- (-4.5,4.5) -- (0,4.5) -- (0,-4.5) -- cycle;
    \fill[draw=none, fill=goldenpoppy, opacity=0.4] (0,4.5) -- (4.5,4.5) -- (4.5,-4.5) -- (0,-4.5) -- cycle;
    \draw[help lines, color=gray!30] (-4.9,-4.9) grid (4.9,4.9);
    \draw[<->,thick,color=gray!85] (-5,0)--(5,0) node[right]{};
    \draw[<->,thick,color=gray!85] (0,-5)--(0,5) node[above]{};
    \draw[->,thick] (0,0)--({2.5*cos(\THETACLASS+\THETAALPHA)},{2.5*sin(\THETACLASS+\THETAALPHA)}) node[below, xshift=-0.1em, yshift=-0.2em]{\scalebox{1.0}{$w_\alpha^* \propto a$}};
    \draw[->,thick] (0,0)--({2.5*cos(\THETACLASS+\THETABETA)},{2.5*sin(\THETACLASS+\THETABETA)}) node[above right, xshift=-0.2em, yshift=-0.4em]{\scalebox{1.0}{$w_\beta^*$}};
\end{tikzpicture} & \qquad &

\begin{tikzpicture}[scale=0.25,shading angle from/.style args={line from #1 to #2}{insert path={let \p1=($#2-#1$),\n1={atan2(\y1,\x1)} in},shading angle=\n1}]
    \pgfmathsetmacro{\THETACLASS}{0} 
    \pgfmathsetmacro{\THETAALPHAONE}{30} 
    \pgfmathsetmacro{\THETAALPHATWO}{60} 
    \fill[draw=none, fill=brightnavyblue, opacity=0.4] (-4.5,-4.5) -- (-4.5,4.5) -- (0,4.5) -- (0,0) -- (4.5,0) -- (4.5,-4.5) -- cycle;
    \fill[draw=none, fill=goldenpoppy, opacity=0.4] (0,0) -- (0,4.5) -- (4.5,4.5) -- (4.5,0) -- cycle;
    \draw[help lines, color=gray!30] (-4.9,-4.9) grid (4.9,4.9);
    \draw[rotate=45, fill=gray, opacity=0.4] (0,0) ellipse (2.5 and 1);
    \draw[style=dotted] (0,0) ellipse (2.5 and 2.5);
    \draw[->,thick] (0,0)--({3*cos(\THETACLASS+\THETAALPHAONE)},{3*sin(\THETACLASS+\THETAALPHAONE)}) node[right, xshift=0em, yshift=0em]{\scalebox{1.0}{$w_{\alpha_1}$}};
    \draw[->,thick] (0,0)--({3*cos(\THETACLASS+\THETAALPHATWO)},{3*sin(\THETACLASS+\THETAALPHATWO)}) node[above, xshift=0.2em, yshift=-0.2em]{\scalebox{1.0}{$w_{\alpha_2}$}};
    \draw[<->,thick,color=gray!65] (-5,0)--(5,0) node[right]{};
    \draw[<->,thick,color=gray!65] (0,-5)--(0,5) node[above]{};
    \draw[->,thick,style=dashed] (0,0)--(3,0) node[below right, xshift=-0.3em]{\scalebox{1.0}{$a_1$}};
    \draw[->,thick,style=dashed] (0,0)--(0,3) node[left, xshift=0.5em, yshift=0.5em]{\scalebox{1.0}{$a_2$}};
    \draw[{Latex[length=1mm,width=1mm]}-] (-1.9,1.9) -- (-2.4,2.8) node[above, xshift=-0.6em, yshift=-0.2em]{\scalebox{0.8}{$p(X)$}};
    \draw[{Latex[length=1mm,width=1mm]}-] (0.8,-1.2) -- (1.5,-2.8) node[below right, xshift=-0.5em, yshift=0.5em]{\scalebox{0.8}{$p(\Xhat)$}};
\end{tikzpicture} &

\begin{tikzpicture}[scale=0.25,shading angle from/.style args={line from #1 to #2}{insert path={let \p1=($#2-#1$),\n1={atan2(\y1,\x1)} in},shading angle=\n1}]
    \pgfmathsetmacro{\THETACLASS}{0} 
    \pgfmathsetmacro{\THETAALPHAONE}{15} 
    \pgfmathsetmacro{\THETAALPHATWO}{75} 
    \fill[draw=none, fill=brightnavyblue, opacity=0.4] (-4.5,-4.5) -- (-4.5,4.5) -- (0,4.5) -- (0,0) -- (4.5,0) -- (4.5,-4.5) -- cycle;
    \fill[draw=none, fill=goldenpoppy, opacity=0.4] (0,0) -- (0,4.5) -- (4.5,4.5) -- (4.5,0) -- cycle;
    \draw[help lines, color=gray!30] (-4.9,-4.9) grid (4.9,4.9);
    \draw[rotate=45, fill=gray, opacity=0.4] (0,0) ellipse (2.5 and 2);
    \draw[style=dotted] (0,0) ellipse (2.5 and 2.5);
    \draw[<->,thick,color=gray!65] (-5,0)--(5,0) node[right]{};
    \draw[<->,thick,color=gray!65] (0,-5)--(0,5) node[above]{};
    \draw[->,thick] (0,0)--({3*cos(\THETACLASS+\THETAALPHAONE)},{3*sin(\THETACLASS+\THETAALPHAONE)}) node[above right, xshift=-0.3em, yshift=-0.5em]{\scalebox{1.0}{$w_{\alpha_1}$}};
    \draw[->,thick] (0,0)--({3*cos(\THETACLASS+\THETAALPHATWO)},{3*sin(\THETACLASS+\THETAALPHATWO)}) node[above, xshift=0.1em,yshift=-0.2em]{\scalebox{1.0}{$w_{\alpha_2}$}};
    \draw[{Latex[length=1mm,width=1mm]}-{Latex[length=1mm,width=1mm]}] ({2*cos(\THETACLASS+\THETAALPHAONE)},{2*sin(\THETACLASS+\THETAALPHAONE)}) arc (\THETAALPHAONE:\THETAALPHATWO:2);
    \draw[{Latex[length=1mm,width=1mm]}-] (-1.9,1.9) -- (-2.4,2.8) node[above, xshift=-0.2em, yshift=-0.2em]{\scalebox{0.8}{$p(X)$}};
    \draw[{Latex[length=1mm,width=1mm]}-] (1.5,-1.5) -- (2.4,-2.8) node[below right, xshift=-0.5em, yshift=0.5em]{\scalebox{0.8}{$p(\Xhat)$}};
    \draw[->,thick,style=dashed] (0,0)--(3,0) node[below right, xshift=-0.3em]{\scalebox{1.0}{}};
    \draw[->,thick,style=dashed] (0,0)--(0,3) node[left]{\scalebox{1.0}{}};
    \draw[{Latex[length=1mm,width=1mm]}-] (1.55,1.55) -- (2.6,2.6) node[above right, xshift=-0.3em, yshift=-0.3em]{\scalebox{0.7}{$\lambda\uparrow$}};
\end{tikzpicture} \\

\hspace{-0.07in}(a) & \hspace{-0.07in}(b) & \qquad & \hspace{-0.07in}(c) & \hspace{-0.07in}(d)

\end{tabular}
    \caption{Explaining simple classifiers in $\R^2$. (a) Visualizing the conditional distribution $p(\Xhat \mid \alpha)$ provides intuition for the linear-Gaussian model. (b) Linear classifier with yellow encoding high probability of $y=1$ (right side), and blue encoding high probability of $y=0$ (left side). Proposition \ref{prop:lingauss/single-hyperplane} shows that the optimal solution to \eqref{eq:objective} is $w^*_\alpha \propto a$ and $w^*_\beta \perp w_\alpha^*$ for $\lambda > 0$. (c-d) For the ``and'' classifier, varying $\lambda$ trades between causal alignment and data representation.}
    \label{fig:lingauss}
\end{figure}

\textbf{Linear classifier.} Consider first a linear separator $p(y = 1 \mid x) = \sigma(a^T x)$, where $a \in \R^N$ denotes the decision boundary normal and $\sigma$ is a sigmoid function (visualized in $\R^2$ in Figure \ref{fig:lingauss}(a)). With a single causal and single noncausal factor ($K=L=1$), learning an explanation consists of finding the $w_{\alpha}, w_{\beta} \in \R^{2}$ that maximize \eqref{eq:objective}. Intuitively, we expect $w_\alpha$ to align with $a$ because this direction allows $\alpha$ to produce the largest change in classifier output statistics. This can be seen by considering the distribution $p(\Xhat \mid \alpha)$ depicted in Figure \ref{fig:lingauss}(a), where we denote $\Xhat=g(\alpha,\beta)$ for convenience. Since the generative model is linear-Gaussian, varying $\alpha$ translates $p(\Xhat \mid \alpha)$ along the direction $w_\alpha$. When this direction is more aligned with the classifier normal $a$, interventions on $\alpha$ cause a larger change in classifier output by moving $p(\Xhat \mid \alpha)$ across the decision boundary. Because the data distribution is isotropic, we expect $\DataFidelity$ to achieve its maximum when $w_{\beta}$ is orthogonal to $w_\alpha$, allowing $w_{\alpha}$ and $w_{\beta}$ to perfectly represent the data distribution. By combining these two insights, we see that the solution of \eqref{eq:objective} is given by $w^*_\alpha \propto a$ and $w^*_\beta \perp w^*_\alpha$ (Figure \ref{fig:lingauss}(b)). 

This intuition is formalized in the following proposition, where for analytical convenience we use the (sigmoidal) normal cumulative distribution function as the classifier nonlinearity $\sigma$:
\begin{proposition}
    \label{prop:lingauss/single-hyperplane}
    Let $\mathcal{X} = \R^N$, $K=1$, $L=N-1$, and $p(Y = 1 \mid x) = \sigma(a^T x)$, where $\sigma$ is the normal cumulative distribution function. Suppose that the columns of $W = [w_\alpha ~ W_\beta]$ are normalized to magnitude $\sqrt{1-\gamma}$ with $\gamma < 1$. Then for any $\lambda > 0$ and for $\DataFidelity(p(\Xhat),p(X)) = -\mathrm{D}_{\mathrm{KL}}(p(X) ~\Vert~ p(\Xhat))$, the objective \eqref{eq:objective} is maximized when $w_{\alpha} \propto a$, $W_{\beta}^T a = 0$, and $W_\beta^T W_\beta=(1-\gamma)I$.
\end{proposition}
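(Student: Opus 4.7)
The plan is to show that the proposed configuration simultaneously maximizes $\CausalEffect(\alpha,Y)$ and $\DataFidelity(p(\Xhat),p(X))$ individually, so it optimizes the weighted sum for every $\lambda > 0$. Because $(\alpha,\beta,\varepsilon)$ are jointly Gaussian and the problem is fully Gaussian apart from the classifier nonlinearity, the $\CausalEffect$ term collapses to a monotone function of a single scalar, and the $\DataFidelity$ term becomes a covariance-matching condition.

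First I compute $p(Y = 1 \mid \alpha)$. Conditionally on $\alpha$, the scalar $a^T \Xhat = (a^T w_\alpha)\alpha + a^T W_\beta \beta + a^T \varepsilon$ is Gaussian with mean $(a^T w_\alpha)\alpha$ and variance $\|W_\beta^T a\|^2 + \gamma\|a\|^2$. Using the standard identity $\E[\sigma(\mu + Z)] = \sigma\bigl(\mu/\sqrt{1+\mathrm{Var}(Z)}\bigr)$ for $\sigma$ the standard normal CDF and $Z$ zero-mean Gaussian, this yields
\[
p(Y = 1 \mid \alpha) = \sigma(c\,\alpha), \qquad c = \frac{a^T w_\alpha}{\sqrt{1 + \|W_\beta^T a\|^2 + \gamma\|a\|^2}}.
\]
Since $\alpha \sim \N(0,1)$ is symmetric and $\sigma(c\alpha) + \sigma(-c\alpha) = 1$, we have $\Pr(Y=1) = 1/2$, so $H(Y) = \log 2$ and $I(\alpha; Y) = \log 2 - \E_\alpha[h(\sigma(c\alpha))]$ with $h$ the binary entropy. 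Differentiating $h(\sigma(x))$ gives $\tfrac{d}{dx} h(\sigma(x)) = \log\bigl((1-\sigma(x))/\sigma(x)\bigr)\phi(x)$, which is negative for $x > 0$ and zero at $x = 0$, so $h(\sigma(\cdot))$ is even and strictly decreasing in $|x|$; hence $\CausalEffect = I(\alpha;Y)$ is strictly increasing in $|c|$.

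To maximize $|c|$ under the constraint $\|w_\alpha\|^2 = 1 - \gamma$, Cauchy--Schwarz gives $|a^T w_\alpha| \le \|a\|\sqrt{1-\gamma}$ with equality iff $w_\alpha \propto a$, and the denominator is minimized by $W_\beta^T a = 0$; these conditions are simultaneously attainable. For $\DataFidelity$, both $X$ and $\Xhat$ are zero-mean Gaussian with covariances $I$ and $\Sigma_{\Xhat} = w_\alpha w_\alpha^T + W_\beta W_\beta^T + \gamma I$, so the Gaussian KL is nonnegative with equality iff $\Sigma_{\Xhat} = I$, i.e., $w_\alpha w_\alpha^T + W_\beta W_\beta^T = (1-\gamma) I$. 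Substituting $w_\alpha = \sqrt{1-\gamma}\,\hat{a}$ with $\hat{a} = a/\|a\|$ forces $W_\beta W_\beta^T = (1-\gamma)(I - \hat{a}\hat{a}^T)$, a matrix of rank $N-1 = L$. Since $W_\beta^T W_\beta \in \R^{L \times L}$ shares the nonzero spectrum of $W_\beta W_\beta^T$, which consists of $L$ copies of $(1-\gamma)$, we conclude $W_\beta^T W_\beta = (1-\gamma) I$. This configuration is feasible (the column-norm constraints hold automatically), and because it maximizes both terms of the objective individually, it maximizes $\CausalEffect + \lambda \DataFidelity$ for every $\lambda > 0$.

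The main obstacle is establishing that $I(\alpha;Y)$ is monotone in $|c|$: once that is in hand, the rest reduces to Cauchy--Schwarz and Gaussian covariance matching. A secondary subtlety is the rank/spectrum argument that promotes the covariance condition $W_\beta W_\beta^T = (1-\gamma)(I-\hat{a}\hat{a}^T)$ into the Gram-matrix condition $W_\beta^T W_\beta = (1-\gamma)I$; this uses the assumed dimension $L = N-1$ so that $W_\beta$ has full column rank and the two Gram matrices share a common (flat) nonzero spectrum.
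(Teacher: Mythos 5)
Your proposal is correct and follows essentially the same route as the paper's proof: decompose $I(\alpha;Y)=H(Y)-H(Y\mid\alpha)$, show $H(Y)=\log 2$ by symmetry, reduce $p(Y=1\mid\alpha)$ to $\sigma(c\alpha)$ via the Gaussian-CDF identity, argue monotonicity of the conditional entropy in $|c|$, maximize $|c|$ by Cauchy--Schwarz together with $W_\beta^T a=0$, and observe that the data-fidelity term is independently maximized by covariance matching, so the two terms are simultaneously optimized for every $\lambda>0$. The only departures are cosmetic: you invoke nonnegativity of the KL divergence (equality iff $WW^T+\gamma I=I$) where the paper explicitly minimizes $\sum_i(\log\lambda_i+1/\lambda_i)$ over the eigenvalues, and you make explicit the rank/spectrum step promoting $W_\beta W_\beta^T=(1-\gamma)(I-\hat a\hat a^T)$ to $W_\beta^TW_\beta=(1-\gamma)I$, which the paper leaves implicit.
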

The proof, which is listed in Appendix \ref{sec:supp/proofs/lingauss-single-hyperplane}, follows geometric intuition for the behavior of $\CausalEffect$. This result verifies our objective's ability to construct explanations with our desired properties: the causal factor learns the direction in which the classifier output changes, and the complete set of latent factors represent the data distribution.

\textbf{``And'' classifier.} Now consider the slightly more complex ``and'' classifier parameterized by two orthogonal hyperplane normals $a_1, a_2 \in \R^2$ (Figure \ref{fig:lingauss}(c)) given by $p(Y = 1 \mid x) = \sigma(a_1^T x) \cdot \sigma(a_2^T x)$. This classifier assigns a high probability to $Y = 1$ when both $a_1^T x > 0$ and $a_2^T x > 0$. Here we use $K = 2$ causal factors and $L = 0$ noncausal factors to illustrate the role of $\lambda$ in trading between the terms in our objective. In this setting, learning an explanation entails finding the $w_{\alpha_1}, w_{\alpha_2} \in \R^{2}$ that maximize \eqref{eq:objective}.

Figure \ref{fig:lingauss}(c-d) depicts the effect of $\lambda$ on the learned $w_{\alpha_1}, w_{\alpha_2}$ (see Appendix \ref{sec:supp/lingauss-details} for empirical visualizations). Unlike in the linear classifier case, when explaining the ``and'' classifier there is a tradeoff between the two terms in our objective: the causal influence term encourages both $w_{\alpha_1}$ and $w_{\alpha_2}$ to point towards the upper right-hand quadrant of the data space, the direction that produces the largest variation in class output probability. On the other hand, the isotropy of the data distribution results in the data fidelity term encouraging orthogonality between the factor directions. Therefore, when $\lambda$ is small the causal effect term dominates, aligning the causal factors to the upper right-hand quadrant of the data space (Figure \ref{fig:lingauss}(c)). As $\lambda$ increases (Figure \ref{fig:lingauss}(d)), the larger weight on the data fidelity term encourages orthogonality between the factor directions so that $p(\Xhat)$ more closely approximates $p(X)$. This example illustrates how $\lambda$ must be selected carefully to represent the data distribution while learning meaningful explanatory directions (see Section \ref{sec:methods/training-procedure}).

\section{Experiments with VAE architecture}
\label{sec:vae}
\begin{figure}
    \includegraphics[width=\textwidth]{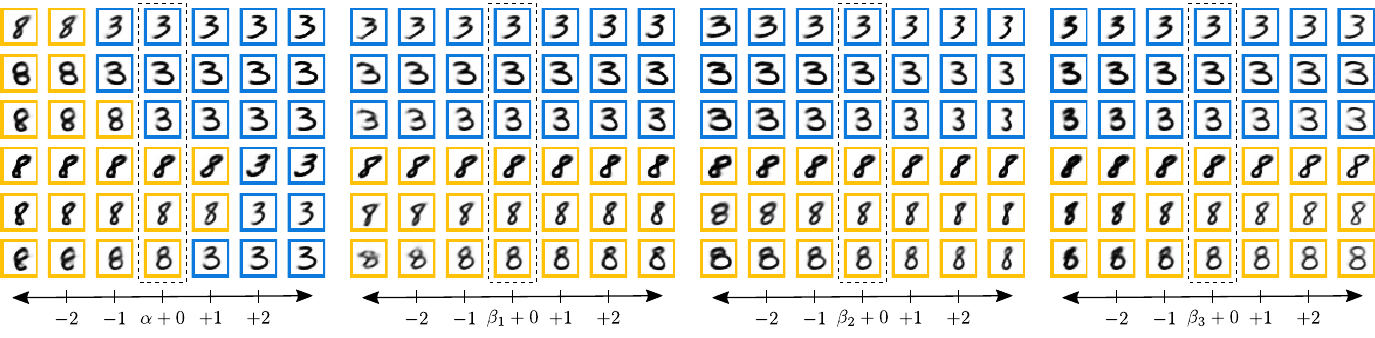} \\
    \hphantom{.} \hspace{0.22in} (a) Sweep $\alpha$ \hspace{0.6in} (b) Sweep $\beta_1$ \hspace{0.6in} (c) Sweep $\beta_2$ \hspace{0.6in} (d) Sweep $\beta_3$
    \caption{Visualizations of learned latent factors. (a) Changing the causal factor $\alpha$ provides the global explanation of the classifier. Images in the center column of each grid are reconstructed samples from the validation set; moving left or right in each row shows $g(\alpha, \beta)$ as a single latent factor is varied. Changing the learned causal factor $\alpha$ affects the classifier output (shown as colored outlines). (b-d) Changing the noncausal factors $\{\beta_i\}$ affects stylistic aspects such as thickness and skew but does not affect the classifier output.}
    \label{fig:mnist-qual}
\end{figure}

In this section we generate explanations of CNN classifiers trained on image recognition tasks, letting $G$ be a set of neural networks and adopting the VAE architecture shown in Figure \ref{fig:graphical-abstract-dag}(a) to learn $g$.

\begin{figure}
    \centering
    \includegraphics{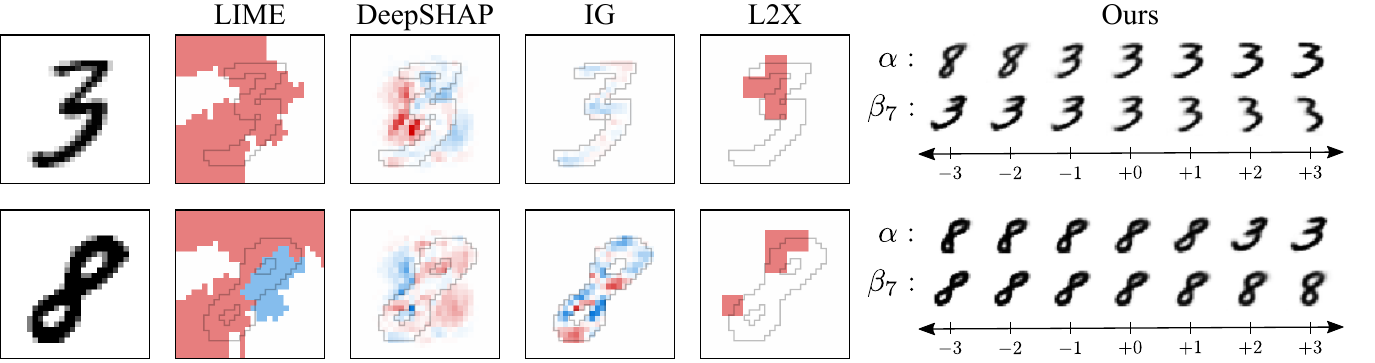}
    \vspace{-0.1in}
    \caption{Compared to popular explanation techniques that generate saliency map-based explanations, our explanations consist of learned aspect(s) of the data, visualized by sweeping the associated latent factors (remaining latent factor sweeps are shown in Appendix \ref{sec:supp/vae-details/comparison}). Our explanations are able to differentiate causal aspects (pixels that define 3 from 8) from purely stylistic aspects (here, rotation).}
    \label{fig:comparison}
\end{figure}

\textbf{Qualitative results.} We train a CNN classifier with two convolutional layers followed by two fully connected layers on MNIST 3 and 8 digits, a common test setting for explanation methods \cite{lundberg2017unified,schwab2019cxplain}. Using the parameter tuning procedure described in Algorithm \ref{alg:parameter-selection}, we select $K = 1$ causal factor, $L = 7$ noncausal factors, and $\lambda = 0.05$. Figure \ref{fig:mnist-qual}(a) shows the global explanation for this classifier and dataset, which visualizes how $g(\alpha,\beta)$ changes as $\alpha$ is modified. We observe that $\alpha$ controls the features that differentiate the digits 3 and 8, so changing $\alpha$ changes the classifier output while preserving stylistic features irrelevant to the classifier such as skew and thickness. By contrast, Figures \ref{fig:mnist-qual}(b-d) show that changing each $\beta_i$ affects stylistic aspects such as thickness and skew but not the classifier output. Details of the experimental setup and training procedure are listed in Appendix \ref{sec:supp/vae-details/mnist} along with additional results.

\textbf{Comparison to other methods.} Figure \ref{fig:comparison} shows the explanations generated by several popular competitors: LIME \cite{ribeiro2016why}, DeepSHAP \cite{lundberg2017unified}, Integrated Gradients (IG) \cite{sundararajan2017axiomatic}, and L2X \cite{chen2018learning}. Each of these methods generates explanations that quantify a notion of relevance of (super)pixels to the classifier output, visualizing the result with a saliency map. While this form of explanation can be appealing for its simplicity, it fails to capture more complex relationships between pixels. For example, saliency map explanations cannot differentiate the ``loops'' that separate the digits 3 and 8 from other stylistic factors such as thickness and rotation present in the same (super)pixels. Our explanations overcome this limitation by instead visualizing latent factors that control different aspects of the data. This is demonstrated on the right of Figure \ref{fig:comparison}, where latent factor sweeps show the difference between classifier-relevant and purely stylistic aspects of the data. Observe that $\alpha$ controls data aspects used by the classifier to differentiate between classes, while the noncausal factor controls rotation. Appendix \ref{sec:supp/vae-details/comparison} visualizes the remaining noncausal factors and details the experimental setup.

\textbf{Quantitative results.} We next learn explanations of a CNN trained to classify t-shirt, dress, and coat images from the Fashion MNIST dataset \cite{xiao2017fashion}. Following the parameter selection procedure of Algorithm \ref{alg:parameter-selection}, we select $K = 2$, $L = 4$, and $\lambda = 0.05$. We evaluate the efficacy of our explanations in this setting using two quantitative metrics. First, we compute the information flow \eqref{eq:information-flow} from each latent factor to the classifier output $Y$. Figure \ref{fig:quantitative}(a) shows that, as desired, the information flow from $\alpha$ to $Y$ is large while the information flow from $\beta$ to $Y$ is small. Second, we evaluate the reduction in classifier accuracy after individual aspects of the data are removed by fixing a single latent factor in each validation data sample to a different random value drawn from the prior $\mathcal{N}(0,1)$. This test is frequently used as a metric for explanation quality; our method has the advantage of allowing us to remove certain data aspects while remaining in-distribution rather than crudely removing features by masking (super)pixels \cite{hooker2019benchmark}. Figure \ref{fig:quantitative}(b) shows this reduction in classifier accuracy. Observe that changing aspects controlled by learned causal factors indeed significantly degrades the classifier accuracy, while removing aspects controlled by noncausal factors has only a negligible impact on the classifier accuracy. Figure \ref{fig:quantitative}(c-d) visualizes the aspects learned by $\alpha_1$ and $\beta_1$. As before, only the aspects of the data controlled by $\alpha$ are relevant to the classifier: changing $\alpha_1$ produces a change in the classifier output, while changing $\beta_1$ affects only aspects that do not modify the classifier output. Appendix \ref{sec:supp/vae-details/fmnist} contains details on the experimental setup and complete results.

\begin{figure}[t]
    \includegraphics[width=\textwidth]{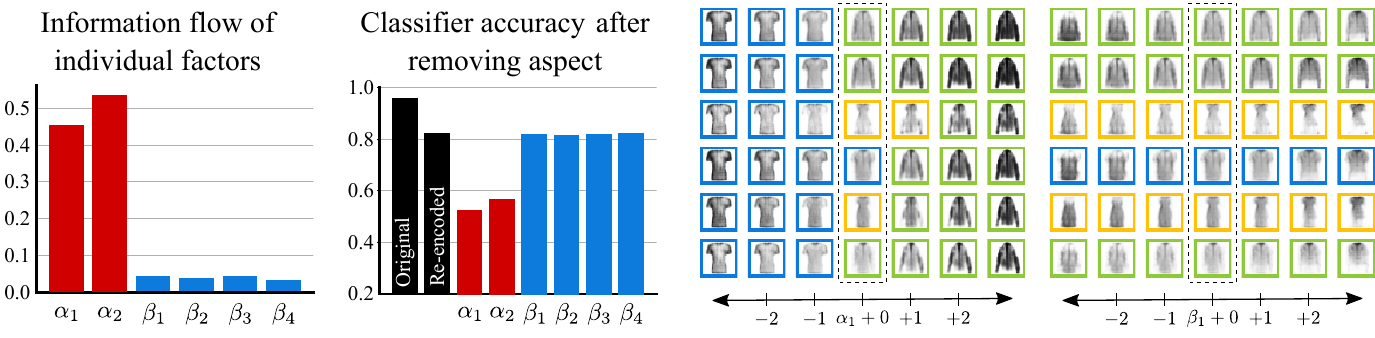} \\
    \hphantom{.}~\hspace{0.54in} (a) \hspace{1.15in} (b) \hspace{0.85in} (c) Sweep $\alpha_1$ \hspace{0.6in} (d) Sweep $\beta_1$
    \caption{(a) Information flow \eqref{eq:information-flow} of each latent factor on the classifier output statistics. (b) Classifier accuracy when data aspects controlled by individual latent factors are removed (original: accuracy on validation set; re-encoded: classifier accuracy on validation set encoded and reconstructed by VAE), showing that learned causal factors (but not noncausal factors) control data aspects relevant to the classifier. (c-d) Modifying $\alpha_1$ changes the classifier output, while modifying $\beta_1$ does not.}
    \label{fig:quantitative}
\end{figure}

\section{Discussion}
\label{sec:discussion}
The central contribution of our paper is a generative framework for learning a rich and flexible vocabulary to explain a black-box classifier, and a method that uses this vocabulary and causal modeling to construct explanations. Our derivation from a causal model allows us to learn explanatory factors that have a causal, not correlational, relationship with the classifier, and the information-theoretic measure of causality that we adapt allows us to completely capture complex causal relationships. Our use of a generative framework to learn independent latent factors that describe different aspects of the data allows us to ensure that our explanations respect the data distribution.

Applying this framework to practical explanation tasks requires selecting a generative model architecture, and then training this generative model using data relevant to the classification task. The data used to train the explainer may be the original training set of the classifier, but more generally it can be any dataset; the resulting explanation will reveal the aspects in that specific dataset that are relevant to the classifier. The user must also select a generative model $g$ with appropriate capacity. Underestimating this capacity could reduce the effectiveness of the resulting explanations, while overestimating this capacity will needlessly increase the training cost. We explore this selection further in Appendix \ref{sec:supp/vae-capacity} both empirically and by using results from \cite{feder1994relations} to show how the value of $I(\alpha;Y)$ can be interpreted as a ``certificate'' of sufficient generative model capacity.

Our framework combining generative and causal modeling is quite general. Although we focused on the use of learned data aspects to generate explanations by visualizing the effect of modifying learned causal factors, the learned representation could also be used to generate counterfactual explanations --- minimal perturbations of a data sample that change the classifier output \cite{wachter2018counterfactual,miller2019explanation}. Our framework would address two common challenges in counterfactual explanation: because we can optimize over a low-dimensional set of latent factors, we avoid a computationally infeasible search in input space, and because each point in space maps to an in-distribution data sample, our model naturally ensures that perturbations result in a valid data point. Another promising avenue for future work is relaxing the independence structure of learned causal factors. Although this would result in a more complex expression for information flow, the sampling procedure we use to compute causal effect would generalize naturally; the more challenging obstacle would be learning latent factors with nontrivial causal structure. Finally, techniques that make the classifier-relevant latent factors more interpretable or better communicate the aspects controlled by each latent factor to humans would improve the quality of our generated explanations.

\section*{Broader impacts}
\label{sec:broader-impacts}
Explanation methods have the potential to play a major role in enabling the safe and fair deployment of machine learning systems \cite{kroll2017accountable,danks2017regulating}, and explainability is a oft-mentioned constraint in their legal and ethical analysis. Policy discussions about machine learning have increasingly turned to principles of transparency and fairness \cite{karsten2020new}, with some legal scholars arguing that the 2016 European General Data Protection Regulation (GDPR) contains a ``right to explanation'' \cite{malgieri2017why}, and recent G20 and OECD recommendations both identifying ``transparency and explainability'' as important principles for the development of machine learning algorithms \cite{g202019g20,oecd2020recommendation}.

The growing literature on explainability that our work contributes to has the potential to improve the transparency and fairness of machine learning systems and increase the level of trust users place in their decisions. Yet these explanation methods, often built from complex and nontransparent components and each proposing subtly different notions of explanation, also risk providing deceptively incomplete understanding of systems used in sensitive applications, or providing false assurances of fairness and lack of bias (see, e.g., \cite{rudin2019stop}). This criticism may be especially true for our method, which constructs explanations using neural networks that are themselves difficult to understand. For the explanation literature to have a positive impact, it is necessary for explanations to be easily yet precisely understood by the nontechnical generalists deploying and regulating machine learning systems. We believe that causal perspective used in this work is valuable in this regard because causality has been identified as a vocabulary appropriate for translating technical concepts to psychological \cite{miller2019explanation} and legal frameworks \cite{kroll2017accountable,wachter2018counterfactual}. We also believe our analysis with simple models is important because it endows our explanations with some theoretical grounding. However, a critical need remains for more interdisciplinary research examining how end users understand the outputs of explanation tools (e.g., \cite{tonekaboni2019what}) and how technical tools can be brought to bear to address identified deficiencies.

\begin{ack}
This work was supported by NSF grant CCF-1350954, a gift from the Alfred P.\ Sloan Foundation, and the National Defense Science \& Engineering Graduate (NDSEG) Fellowship.
\end{ack}

\bibliography{moshaughnessy.bib,refs.bib}

\newpage
\appendix

\section{Intuition for and variants of causal influence metric}
\label{sec:supp/causal-obj-variants}
\textbf{Intuition for causal influence objective.} To better understand the causal portion of our objective \eqref{eq:causal-effect-decomposition}, we use standard identities to decompose it as
\begin{equation}
    \mathcal{C} = I(Y; \alpha) = H(Y) - \E_{\alpha}[H(Y \mid \alpha)],
    \label{eq:causal-effect-decomposition}
\end{equation}
where
\begin{equation}
    p(y \mid \alpha) = \int_\beta \int_x p(y \mid x) p(x \mid \alpha, \beta) p(\beta) dx d\beta.
    \label{eq:pygivenalpha}
\end{equation}

The conditional distribution \eqref{eq:pygivenalpha} can be interpreted as the probability of $Y=y$ for a fixed value of $\alpha$, averaged over the values of $\beta$. The decomposition in \eqref{eq:causal-effect-decomposition} therefore shows that $\CausalEffect$ is the \emph{reduction in uncertainty about $Y$ provided by knowledge of $\alpha$}, where this reduction is measured in a global sense in that the effect of $\beta$ is averaged together to produce a single probability estimate for $Y$ and fixed $\alpha$.

As an example, consider the color classifier and generative mapping shown in Figure \ref{fig:graphical-abstract-dag}(a), in which $f$ classifies based on color. The first term in \eqref{eq:causal-effect-decomposition} represents how similar the classifier output is for all objects in the training set. The second term represents how similar the classifier output for groups of objects is, on average, after being grouped by $\alpha$. A large $\CausalEffect = I(\alpha ; Y)$ means that grouping by $\alpha$ significantly increases the confidence the classifier has that objects in each group are of the same class. In this case, grouping by $\alpha = \text{`color'}$ has a much larger effect on the classifier output --- and therefore results in a larger $\CausalEffect$ --- than grouping by $\alpha = \text{`shape'}$ would, since grouping the objects by color results in the classifier gaining much more confidence that each group shares the same class.

\textbf{Variants of causal objective.} Consider the following variants of the \emph{joint, unconditional} objective $\CausalEffect = I(\alpha ; Y)$, our measure of causal influence from Section \ref{sec:methods/causal-influence}:
\begin{enumerate}
    \item \emph{Independent, unconditional:} $\CausalEffect_{iu} = \frac1K \sum_i I(\alpha_i; Y)$
    \item \emph{Independent, conditional:} $\CausalEffect_{ic} = \frac1K \sum_i I(\alpha_i ; Y \mid \alpha_{\neg i}, \beta)$, where $\alpha_{\neg i} = \{\alpha_j\}_{j \neq i}$
    \item \emph{Joint, conditional:} $\CausalEffect_{jc} = I(\alpha; Y \mid \beta)$
\end{enumerate}
Each objective variant gives rise to a classifier explanation that has a causal interpretation, but as we will show, the \emph{character} of each is subtly different. The following proposition begins to explore these differences by relating them using information-theoretic quantities.
\begin{proposition}[Relationship between candidate causal objectives]
    \label{prop:information-flow-relationships}
    The following hold in the DAG of Figure \ref{fig:graphical-abstract-dag}(b):
    \begin{enumerate}
        \item[\emph{(a)}] $\CausalEffect = \CausalEffect_{iu} + \frac1K \sum_{i=1}^K I(\alpha_{\neg i} ; Y \mid \alpha_i)$.
        \item[\emph{(b)}] $\CausalEffect_{jc} = \CausalEffect_{ic} + \frac1K \sum_{i=1}^K I(\alpha_{\neg i} ; Y \mid \beta)$.
        \item[\emph{(c)}] $\CausalEffect_{jc} = \CausalEffect + I(\alpha; \beta \mid Y)$.
        \item[\emph{(d)}] $\CausalEffect_{ic} = \CausalEffect_{iu} + \frac1K \sum_i I(\alpha_i ; \alpha_{\neg i}, \beta \mid Y)$.
    \end{enumerate}
\end{proposition}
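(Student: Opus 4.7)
The plan is to prove all four identities by applying the chain rule of mutual information and exploiting the independence of the latent factors $(\alpha,\beta)$ that is built into the DAG in Figure \ref{fig:graphical-abstract-dag}(b). The central tool is the two-way decomposition
\begin{equation*}
    I(A; B, C) \;=\; I(A; B) + I(A; C \mid B) \;=\; I(A; C) + I(A; B \mid C),
\end{equation*}
applied with appropriate choices of $A$, $B$, $C$ (sometimes further conditioned on an extra variable). Parts (a) and (b) do not need the independence structure; parts (c) and (d) do.

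First, for (a), I would fix any index $i$, write $\alpha=(\alpha_i,\alpha_{\neg i})$, and apply the chain rule to get $I(\alpha;Y) = I(\alpha_i;Y) + I(\alpha_{\neg i};Y\mid \alpha_i)$. Averaging this identity over $i\in\{1,\dots,K\}$ (the left side is independent of $i$, so the average equals $\CausalEffect$) yields the claimed decomposition. Part (b) is essentially the same calculation conditioned throughout on $\beta$: the chain rule $I(\alpha;Y\mid\beta) = I(\alpha_i;Y\mid \alpha_{\neg i},\beta) + I(\alpha_{\neg i};Y\mid \beta)$ followed by averaging over $i$ produces $\CausalEffect_{jc} = \CausalEffect_{ic} + \tfrac1K\sum_i I(\alpha_{\neg i};Y\mid\beta)$.

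Parts (c) and (d) are the places where the DAG assumption enters. For (c), I would expand $I(\alpha;Y,\beta)$ in two ways,
\begin{equation*}
    I(\alpha;\beta) + I(\alpha;Y\mid\beta) \;=\; I(\alpha;Y) + I(\alpha;\beta\mid Y),
\end{equation*}
and then use the key fact that $\alpha\perp\beta$ in the prior (isotropic Gaussian) so that $I(\alpha;\beta)=0$. Rearranging gives $\CausalEffect_{jc} = I(\alpha;Y\mid\beta) = I(\alpha;Y) + I(\alpha;\beta\mid Y) = \CausalEffect + I(\alpha;\beta\mid Y)$. Part (d) follows the same pattern applied to $I(\alpha_i;Y,\alpha_{\neg i},\beta)$: the two-way expansion gives
\begin{equation*}
    I(\alpha_i;\alpha_{\neg i},\beta) + I(\alpha_i;Y\mid \alpha_{\neg i},\beta) \;=\; I(\alpha_i;Y) + I(\alpha_i;\alpha_{\neg i},\beta\mid Y),
\end{equation*}
and the prior independence of all latent factors makes $I(\alpha_i;\alpha_{\neg i},\beta)=0$, so solving for $I(\alpha_i;Y\mid\alpha_{\neg i},\beta)$ and averaging over $i$ yields the result.

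No step is a real obstacle — the whole proposition is a bookkeeping exercise in the chain rule — but the one thing to be careful about is making the use of the independence assumption explicit in (c) and (d), and pointing out that this independence is not a free consequence of the chain rule but rather comes from the modeling choice encoded in the DAG (and enforced by the isotropic prior during training). I would present (a) and (b) first as purely algebraic identities that hold regardless of the distribution, and then present (c) and (d) with a clear note that the prior independence of $(\alpha,\beta)$ — and among the coordinates of $\alpha$ — is precisely what eliminates the extra $I(\alpha;\beta)$ and $I(\alpha_i;\alpha_{\neg i},\beta)$ terms.
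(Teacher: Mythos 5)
Your proof is correct and follows essentially the same route as the paper's: parts (a) and (b) are the mutual-information chain rule (unconditioned, and conditioned on $\beta$) averaged over $i$, and parts (c) and (d) hinge on the prior independence of the latent factors, exactly as the paper notes. The only cosmetic difference is that the paper derives (c) by a longhand Bayes'-rule manipulation of the defining integral, whereas you package the same fact as the two-way expansion of $I(\alpha;Y,\beta)$ with $I(\alpha;\beta)=0$; these are the same identity in different clothing.
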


\begin{figure}
    \centering
    \begin{tikzpicture}[every text node part/.style={align=center}]
    \node[draw=black, minimum width=4.7cm] (indepuncond) {independent, unconditional\\$\CausalEffect_{iu} = \frac1K \sum_i I(\alpha_i ; Y)$};
    \node[right of=indepuncond, draw=black, minimum width=4.7cm, xshift=7.5cm] (jointuncond) {joint, unconditional\\$\CausalEffect = I(\alpha ; Y)$};
    \node[below of=jointuncond, draw=black, minimum width=4.7cm, yshift=-2cm] (jointcond) {joint, conditional\\$\CausalEffect_{jc} = I(\alpha ; Y \mid \beta)$};
    \node[below of=indepuncond, draw=black, minimum width=4.7cm, yshift=-2cm] (indepcond) {independent, conditional\\$\CausalEffect_{ic} = \frac1K\sum_i I(\alpha_i ; Y \mid \alpha_{\neg i}, \beta)$};
    \path[->] (indepuncond) edge node [above] {$+\frac1K \sum_i I(\alpha_{\neg i} ; Y \mid \alpha_i)$} (jointuncond);
    \path[->] (indepcond) edge node [above] {$+\frac1K \sum_i I(\alpha_{\neg i} ; Y \mid \beta)$} (jointcond);
    \path[->] (jointuncond) edge node [left] {$+I(\alpha; \beta \mid Y)$} (jointcond);
    \path[->] (indepuncond) edge node [right] {$+ \frac1K \sum_i I(\alpha_i;\alpha_{\neg i}, \beta \mid Y)$} (indepcond);
    \end{tikzpicture}
    \caption{Graphical representation of relationships between causal objective variants derived from Proposition \ref{prop:information-flow-relationships}.}
    \label{fig:information-flow-relationships}
\end{figure}
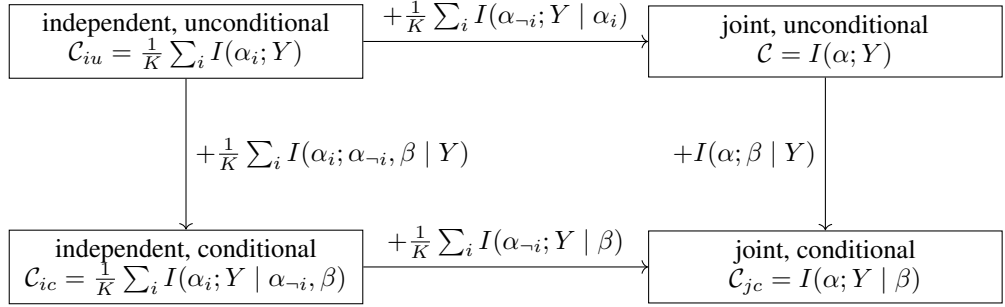

These relationships are depicted visually in Figure \ref{fig:information-flow-relationships} and proved in Appendix \ref{sec:supp/proofs/information-flow-relationships}. Note that only (c) and (d) use the independence of the latent variables in our DAG. The ``adjustment factors'' that relate the objective variants can be interpreted as follows:
\begin{enumerate}
    \item By conditioning on other latent factors (i.e., using $\CausalEffect_{ic}$, $\CausalEffect_{iu}$, or $\CausalEffect_{jc}$ rather than $\CausalEffect$) we include the ``adjustment factor'' $\frac1K \sum_i I(\alpha_i ; \alpha_{\neg i}, \beta \mid Y)$ (in the ``independent'' case) or $I(\alpha ; \beta \mid Y)$ (in the ``joint'' case) in the objective. These terms encourage complex interactions between latent factors within each group of similarly-classified points. On the one hand, the stastistical pattern that these terms encourage arises naturally from the DAG in Figure \ref{fig:graphical-abstract-dag}(b): although the latent factors are independent, conditioning on $Y$ renders them dependent. This conditional dependence pattern is often referred to as Berkson's paradox or the ``explaining away'' phenomenon. To illustrate this concept, consider a classifier that classifies paintings at an auction as $Y \in \{\text{`sold'},~\text{`not sold'}\}$ based on the learned latent factors $z_1=\text{`beautiful'}$ and $z_2=\text{`historical value'}$, which we assume to be independent. Once $Y$ is known, however, $z_1$ and $z_2$ are rendered dependent: learning that a sold painting does not have historical value would allow us to infer that it is likely to be beautiful. On the other hand, we do not in general expect that our learned latent factors, which we encourage to be independent, will correspond to semantically meaningful features, so we may not expect them to fit this ``explaining away'' conditional dependence pattern.
    \item By jointly considering the causal factors $\alpha$ rather than summing the causal influence of each $\alpha_i$ (i.e., by using $\CausalEffect$ rather than $\CausalEffect_{iu}$, or $\CausalEffect_{jc}$ rather than $\CausalEffect_{ic}$) we include the ``adjustment factor'' $\frac1K \sum_{i=1}^K I(\alpha_{\neg i}; Y \mid \alpha_i)$ in the objective. This term encourages each learned causal factor to make the remaining causal factors more predictable given the classifier output $Y$, encouraging \emph{interactions} between latent factors to have an effect on the classifier output probability. We consider this to be positive, but using an independent objective might aid in visualizing the relationship between the latent space and data space.
\end{enumerate}

The next section provides more intuition for these objectives in the context of the linear-Gaussian generative map and simple classifiers introduced in Section \ref{sec:lingauss}.
\clearpage

\section{Detailed analysis with linear-Gaussian generative map}
\label{sec:supp/lingauss-details}
\begin{figure}
    \input{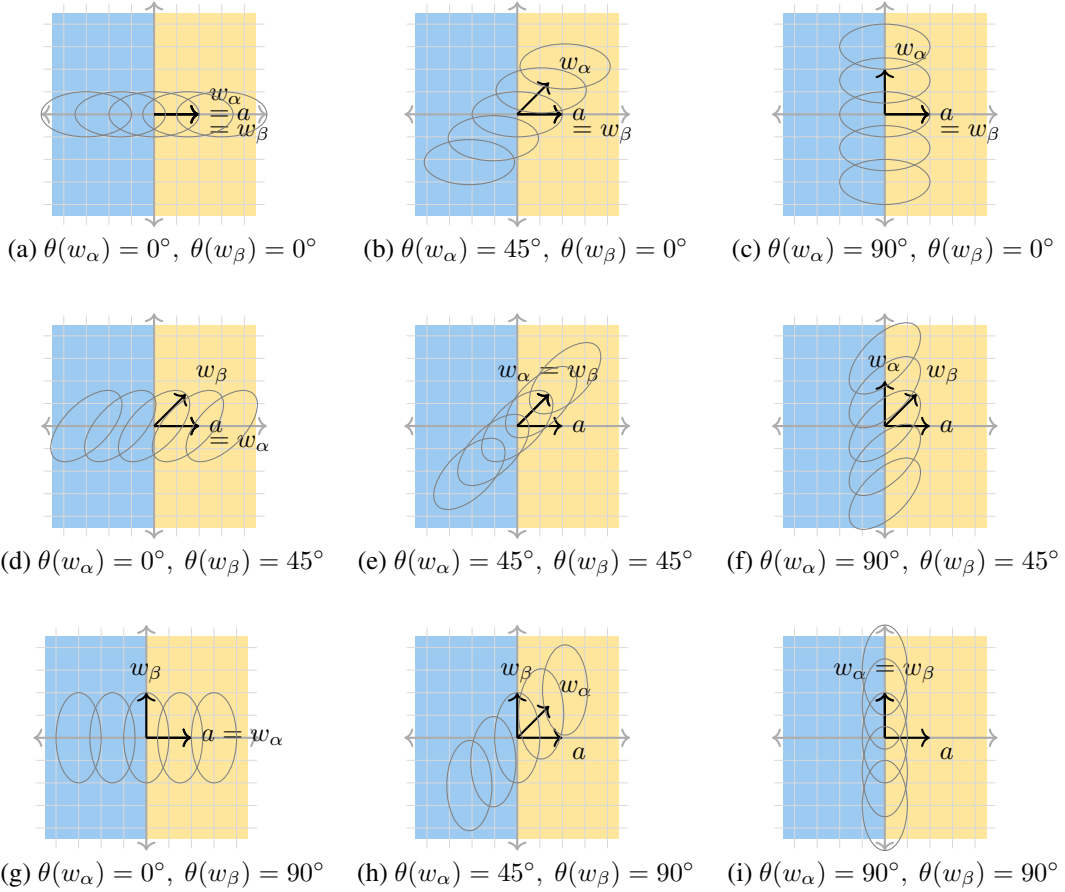}
    \caption{Distributions $p(x \mid \alpha)$ for the linear-Gaussian generative map and single hyperplane classifier when $a = [1,~0]^T$. The orientation of $w_\alpha$ controls the direction in which the probability mass of $p(x \mid \alpha)$ shifts as $\alpha$ is varied, while the orientation of $w_\beta$ controls the rotation of each distribution $p(x \mid \alpha)$.}
    \label{fig:lingauss-single-hyperplane}
\end{figure}

In this section we provide empirical simulations supporting the analysis with a linear-Gaussian generative map in Section \ref{sec:lingauss}. Recall that we use the isotropic data distribution $X \sim \mathcal{N}(0,I)$, latent space prior $(\alpha,\beta) \sim \mathcal{N}(0,I)$, and
\begin{equation*}
    g(\alpha,\beta) = \begin{bmatrix} W_\alpha & W_\beta \end{bmatrix} \begin{bmatrix} \alpha \\ \beta \end{bmatrix} + \varepsilon,
\end{equation*}
where $W_\alpha \in \R^{N \times K}$, $W_\beta \in \R^{N \times L}$, and $\varepsilon \sim \mathcal{N}(0,\gamma I)$.

\textbf{Linear classifier.} Consider first the linear separator in $\R^2$ from Section \ref{sec:lingauss}, $p(Y = 1 \mid x) = \sigma(a^T x)$. With $K = L = 1$, learning an explanation entails learning the $w_\alpha, w_\beta \in \R^2$ that maximize the objective \eqref{eq:objective}. As shown in Proposition \ref{prop:lingauss/single-hyperplane}, the data representation term $\DataFidelity$ encourages $w_\alpha \perp w_\beta$; here we focus on the causal influence term $\CausalEffect$. The decomposition in \eqref{eq:causal-effect-decomposition} shows that $\CausalEffect$ depends on both $p(Y)$ and $p(Y \mid \alpha)$; Figure \ref{fig:lingauss-single-hyperplane} visualizes how the distributions $p(Y \mid \alpha)$ change with $\alpha$ (gray ellipses) and $w_\alpha, w_\beta$ (subplots). Note first that the isotropy of $p(\alpha)$ means that $p(Y)$ has equal probability mass on either side of the classifier decision boundary, regardless of $w_\alpha$ and $w_\beta$. This implies that $H(Y)$ is invariant to $w_\alpha$ and $w_\beta$ for this classifier, a fact formalized in the proof of Proposition \ref{prop:lingauss/single-hyperplane}.

We next explore the role of $w_\alpha$ and $w_\beta$ in $p(x \mid \alpha)$ (and therefore $p(y \mid \alpha)$). Our causal objective $\CausalEffect$ is large when the $p(y \mid \alpha)$ have low entropy in expectation over $\alpha$. Note from Figure \ref{fig:lingauss-single-hyperplane} that $w_\alpha$ controls the direction in which the probability mass of $p(x \mid \alpha)$ shifts as $\alpha$ is varied, while $w_\beta$ controls the rotation of each distribution $p(x \mid \alpha)$. The causal objective $\CausalEffect$ is maximized when the entropy of $p(y \mid \alpha)$ (in expectation over $\alpha$) is smallest --- in other words, when the distributions $p(x \mid \alpha)$ have as little overlap possible with the classifier decision boundary. From Figure \ref{fig:lingauss-single-hyperplane}, we observe that this occurs when $w_\alpha$ is aligned with the decision boundary normal ($w_\alpha \propto a$) and when $w_\beta$ is orthogonal to the decision boundary normal ($w_\beta \perp a$). This selection of $w_\alpha$ and $w_\beta$ minimizes the range of $\alpha$ for which $p(x \mid \alpha)$ contains mass on both sides of the decision boundary.

\begin{figure}
    \centering
    \includegraphics[width=\textwidth]{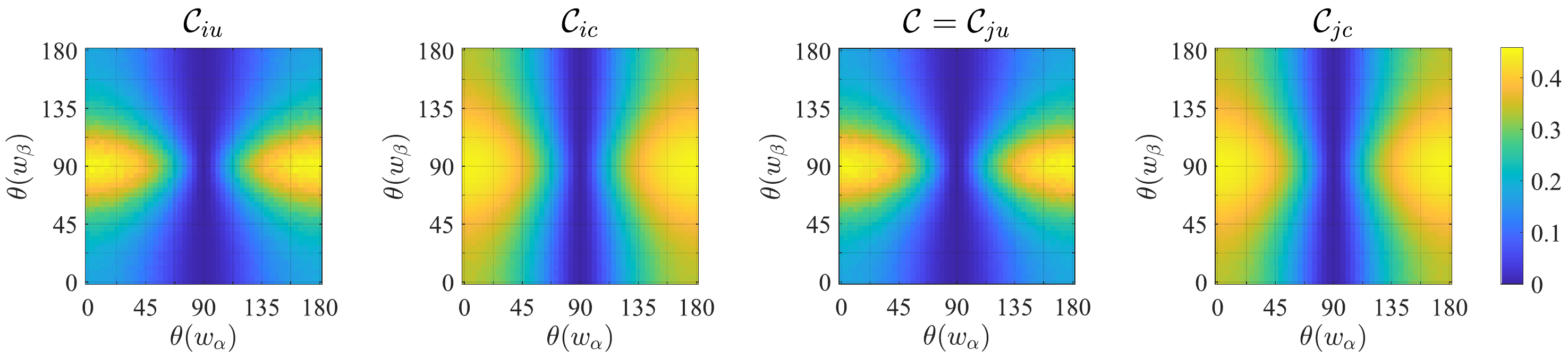}
    \caption{Value of each causal objective variant in the linear-Gaussian generative map, linear classifier setting described in Section \ref{sec:lingauss}, as the orientations of $w_\alpha$ and $w_\beta$ are varied. The classifier decision boundary normal is $\theta(a) = 0^\circ$. Each variant is maximized when $w_\alpha \propto a$ (i.e., $\theta(w_\alpha) = 0^\circ$) and $w_\beta \perp a$ (i.e., $\theta(w_\beta) = 90^\circ$). $\CausalEffect = \CausalEffect_{ju}$ refers to the causal objective \eqref{eq:causal-effect} used in the main text.}
    \label{fig:lingauss-objective-singlehyperplane}
\end{figure}

Figure \ref{fig:lingauss-objective-singlehyperplane} shows the value of each of the causal objective variants described in Appendix \ref{sec:supp/causal-obj-variants} as the orientation of $w_\alpha$ and $w_\beta$ with respect to the classifier decision boundary normal $a$ are varied. For each combination of angles, we compute the causal objective using the sample-based estimate described in Appendix \ref{sec:supp/sampling} with $N_{\alpha} = 2500$, $N_{\beta} = 500$, and the logistic sigmoid function $\sigma$ with steepness $5$. (Note that in the training procedure we achieve satisfactory results with much lower $N_\alpha, N_\beta$.) These results verify the intuition presented above and formalized in Proposition \ref{prop:lingauss/single-hyperplane}: the causal effect is greatest when $w_\alpha \propto a$ and $w_\beta \perp a$. As noted in Section \ref{sec:lingauss}, in this setting both $\CausalEffect$ and $\DataFidelity$ encourage $w_\alpha$ and $w_\beta$ to be orthogonal.

\begin{figure}
    \centering
    \includegraphics[width=\textwidth]{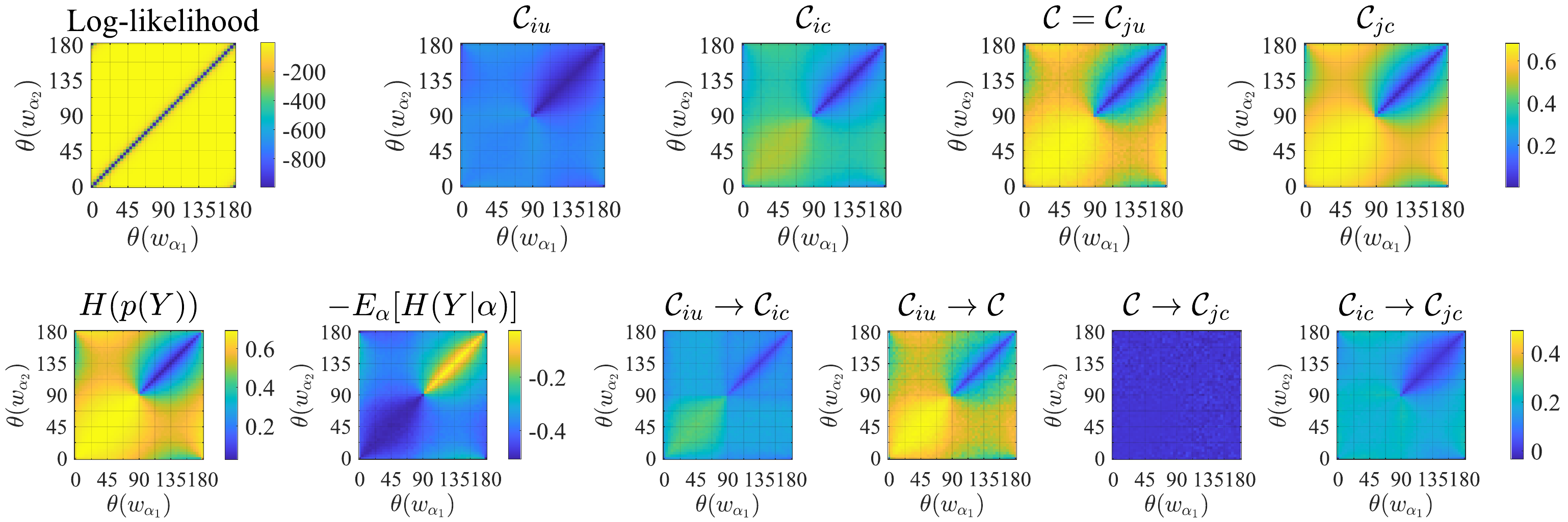}
    \caption{Empirically-computed values of terms relevant to the causal objective variants in the linear-Gaussian generative map, ``and'' classifier setting described in Section \ref{sec:lingauss}. The angles of the classifier decision boundary normals are $\theta(a_1) = 0^\circ$ and $\theta(a_2) = 90^\circ$. Top row: log-likelihood used as $\DataFidelity$; causal objective variants from Appendix \ref{sec:supp/causal-obj-variants}. $\CausalEffect = \CausalEffect_{ju}$ refers to the causal objective \eqref{eq:causal-effect}. Bottom row: terms in decomposition \eqref{eq:causal-effect-decomposition}; ``adjustment factors'' from Proposition \ref{prop:information-flow-relationships}.}
    \label{fig:lingauss-objective-and}
\end{figure}

\begin{figure}
    \centering
    \includegraphics[width=\textwidth]{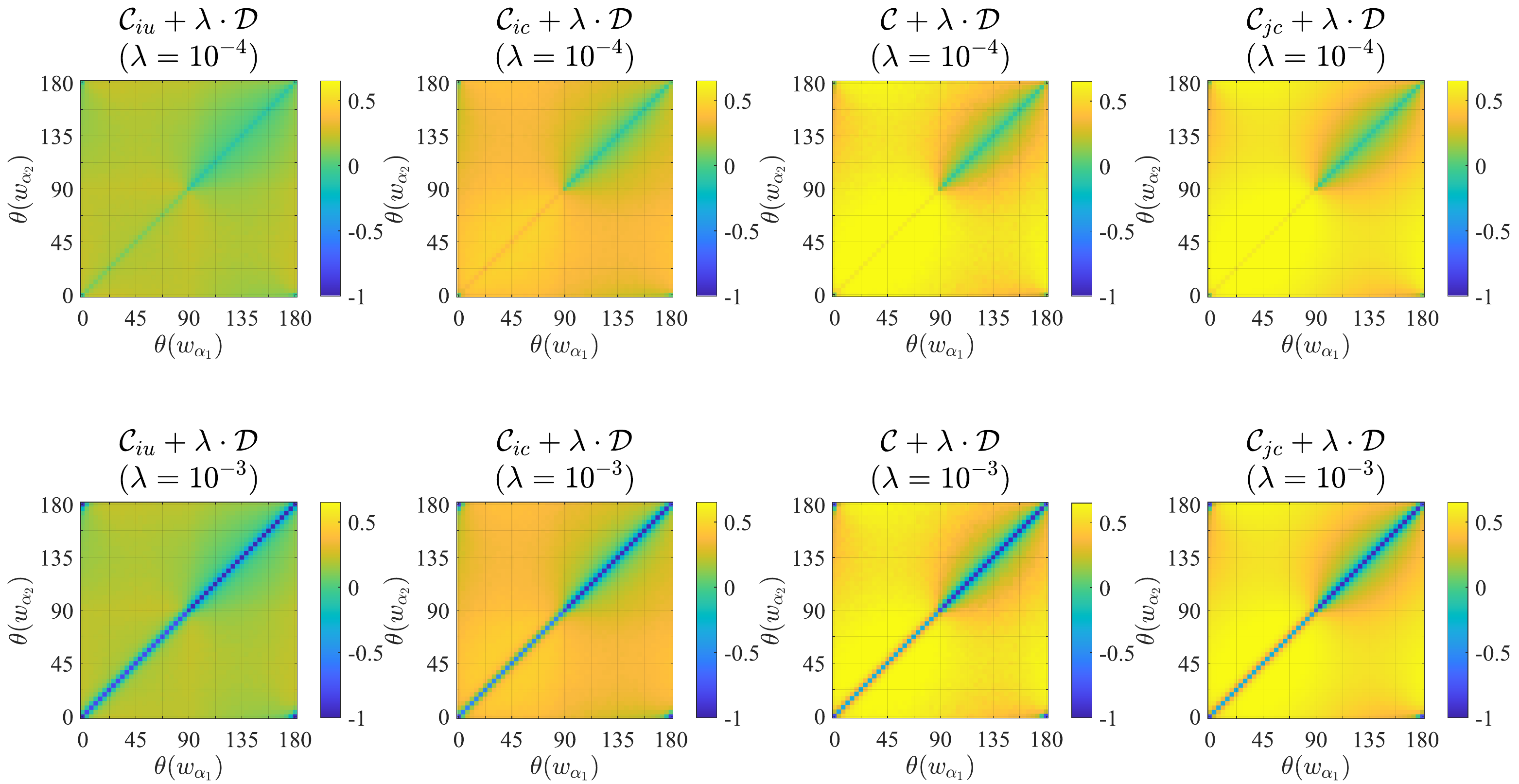}
    \caption{Empirically-computed value of combined objective \eqref{eq:objective} for the causal objective variants in the linear-Gaussian generative map, ``and'' classifier setting described in Section \ref{sec:lingauss}. The angles of the classifier decision boundary normals are $\theta(a_1) = 0^\circ$ and $\theta(a_2) = 90^\circ$. As $\lambda$ increases, the increased weight of the data representation term in the objective encourages the learned $w_{\alpha_1}$ and $w_{\alpha_2}$ to be more orthogonal to better represent the isotropic distribution of the data.}
    \label{fig:lingauss-combined-objective-and}
\end{figure}

\textbf{``And'' classifier.} We now consider the ``and'' classifier in $\R^2$ from Section \ref{sec:lingauss}, $p(Y = 1 \mid x) = \sigma(a_1^T x) \cdot \sigma(a_2^T x)$, where we learn $K = 2$ causal explanatory factors and $L = 0$ noncausal factors. In this setting learning an explanation consists of learning $w_{\alpha_1}, w_{\alpha_2} \in \R^2$ maximizing \eqref{eq:objective}.

Figure \ref{fig:lingauss-objective-and} shows how the value of the causal objective changes with the learned generative mapping in the linear-Gaussian setting of Section \ref{sec:lingauss}. The top row shows the terms in the objective \eqref{eq:objective}: the likelihood and the causal objective variants described in Appendix \ref{sec:supp/causal-obj-variants}. The bottom row shows the components of these causal objective variants, which provide further intuition for their differences: the first two plots show the decomposition of $\CausalEffect = \CausalEffect_{ju}$ from \eqref{eq:causal-effect-decomposition}, and the remaining plots show the ``adjustment factors'' from Proposition \ref{prop:information-flow-relationships} and Figure \ref{fig:information-flow-relationships} that describe the differences between the causal influence objective variants. The logistic sigmoid with steepness 100 is used to implement the classifier, and the causal influence objective variants are computed with $N_\alpha = 2500$ and $N_\beta = 500$.

With the exception of the variant $\CausalEffect_{iu}$, each of these causal objectives is maximized when $w_{\alpha_1}$ and $w_{\alpha_2}$ are aligned in the direction of maximum classifier change: $\theta(w_{\alpha_1}) = \theta(w_{\alpha_2})$ when $a_1 = [1,~ 0]^T$ and $a_2 = [0,~ 1]^T$ as in our example (see Figure \ref{fig:lingauss}(c-d)). Because with this classifier $\CausalEffect$ does not encourage $w_{\alpha_1} \perp w_{\alpha_2}$, here the data representation term $\DataFidelity$ serves to regularize $\CausalEffect$. Figure \ref{fig:lingauss-combined-objective-and} shows the value of the combined objective \eqref{eq:objective} for each causal influence variant and two different values of $\lambda$. We observe that as $\lambda$ increases and the weight of the data representation term increases, the optimal angles of $w_{\alpha_1}$ and $w_{\alpha_2}$ move in opposing directions from $45^\circ$ (the angle of normal bisecting $a_1$ and $a_2$). This supports the intuition described in Section \ref{sec:lingauss} and stylized in Figure \ref{fig:lingauss}(c-d).
\clearpage

\section{Proofs}
\label{sec:supp/proofs}
\subsection{Proof of Proposition \ref{prop:information-flow}}
\label{sec:supp/proofs/information-flow}

Proposition \ref{prop:information-flow} states that information flow coincides with mutual information in our DAG. Here we prove a generalization of the proposition that is also helpful when considering the conditional causal influence objective variants in Appendix \ref{sec:supp/causal-obj-variants}. Specifically, we consider the information flow from $U$ to $V$ \emph{imposing $W$}:
\begin{definition}[Ay and Polani 2008 \cite{ay2008information}]
    Let $U$, $V$, and $W$ be disjoint subsets of nodes. The \emph{information flow from $U$ to $V$ imposing $W$}, denoted $I(U \to V \mid W)$, is 
    \begin{equation*}
        \E_{w \sim W} \left[ \int_U p(u \mid do(w)) \int_V p(v \mid do(u), do(w)) \log \frac{p(v \mid do(u), do(w))}{\int_{u'} p(u' \mid do(w)) p(v \mid do(u'), do(w))} dV dU \right],
    \end{equation*}
    where $do(w)$ represents an intervention in a model that fixes $w$ to a specified value regardless of the values of its parents \cite{pearl2009causality}.
\end{definition}

\begin{proposition}[Information flow in our DAG]
    \label{prop:information-flow-imposing}
    The information flow from $\alpha$ to $Y$ imposing $\beta$ in the DAG of Figure \ref{fig:graphical-abstract-dag}(b) coincides with the mutual information of $\alpha$ and $Y$ conditioned on $\beta$,
    \begin{equation*}
        I(\alpha \to Y \mid do(\beta)) = I(\alpha; Y \mid \beta),
    \end{equation*}
    where conditional mutual information is defined as $I(X ; Y \mid Z) = \E_{X,Y,Z} \left[ \log \frac{p(x,y \mid z}{p(x \mid z) p(y \mid z)} \right]$.
\end{proposition}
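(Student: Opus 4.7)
\textbf{Proof plan for Proposition \ref{prop:information-flow-imposing}.}

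The plan is to reduce every interventional distribution appearing in the definition of $I(\alpha \to Y \mid do(\beta))$ to an ordinary conditional distribution, using (i) the fact that $\alpha$ and $\beta$ are root nodes of the DAG in Figure \ref{fig:graphical-abstract-dag}(b), and (ii) the prior-level independence $\alpha \perp \beta$. Once that reduction is carried out, the integrand becomes exactly the integrand defining conditional mutual information.

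First I would dispatch the terms involving $do(\beta)$ alone. Because $\beta$ has no parents in the DAG, Rule 3 of Pearl's do-calculus \cite[Thm.~3.4.1]{pearl2009causality} gives $p(\alpha \mid do(\beta)) = p(\alpha \mid \beta)$, and then the independence $\alpha \perp \beta$ (built into our prior) reduces this further to $p(\alpha \mid do(\beta)) = p(\alpha)$. Next I would handle the joint intervention $do(\alpha), do(\beta)$: since both $\alpha$ and $\beta$ are parentless, the truncated factorization formula yields $p(y \mid do(\alpha), do(\beta)) = p(y \mid \alpha, \beta)$, where we note that in our DAG $Y$ depends on $(\alpha,\beta)$ only through $X$, so $p(y \mid \alpha,\beta) = \int p(y \mid x) p(x \mid \alpha,\beta)\,dx$.

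With these two substitutions, the definition of $I(\alpha \to Y \mid do(\beta))$ becomes
\begin{equation*}
\mathbb{E}_{\beta}\!\left[\int p(\alpha) \int p(y \mid \alpha,\beta) \log \frac{p(y \mid \alpha,\beta)}{\int p(\alpha')\,p(y \mid \alpha',\beta)\,d\alpha'}\, dy\, d\alpha \right].
\end{equation*}
The denominator marginalizes to $p(y \mid \beta)$ by applying $\alpha \perp \beta$ once more (so that $p(\alpha') = p(\alpha' \mid \beta)$). Rewriting the outer weight $p(\alpha)$ as $p(\alpha \mid \beta)$ and combining with $p(\beta)$ from the expectation yields $p(\alpha,\beta)\,p(y \mid \alpha,\beta) = p(\alpha,\beta,y)$, so the expression is precisely $\mathbb{E}_{\alpha,\beta,Y}\bigl[\log\tfrac{p(Y \mid \alpha,\beta)}{p(Y \mid \beta)}\bigr] = I(\alpha;Y \mid \beta)$. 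Proposition \ref{prop:information-flow} then follows as the special case $W = \emptyset$.

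The main obstacle is really bookkeeping rather than conceptual: one must be careful that the reduction from $do$ to conditioning is justified at each step by the absence of incoming edges to $\alpha$ and $\beta$, and that the two uses of $\alpha \perp \beta$ (first to identify $p(\alpha \mid do(\beta))$ with $p(\alpha)$, second to collapse the marginal in the denominator) are both valid. No step requires assumptions beyond the DAG structure and the independent priors already imposed by our model.
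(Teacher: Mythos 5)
Your proposal is correct and follows essentially the same route as the paper's proof: reduce the interventional distributions $p(\alpha \mid do(\beta))$ and $p(Y \mid do(\alpha), do(\beta))$ to observational conditionals via do-calculus (exploiting that $\alpha$ and $\beta$ are parentless and independent), identify the denominator as $p(y \mid \beta)$, and reassemble the integrand into the conditional mutual information. The only cosmetic difference is that you collapse $p(\alpha \mid do(\beta))$ all the way to $p(\alpha)$ and then re-expand it to $p(\alpha \mid \beta)$ at the end, whereas the paper keeps $p(\alpha \mid \beta)$ throughout via the action/observation exchange rule; the substance is identical.
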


\begin{proof}
    The proof follows from the ``action/observation exchange'' rule of the $do$-calculus \cite[Thm.~3.4.1]{pearl2009causality}. This rule asserts that $p(y \mid do(x), do(z), w) = p(y \mid do(x), z, w)$ if $Y \perp Z \mid X, W$ in $\mathcal{G}_{\overline{X}\underline{Z}}$, the causal model modified to remove connections entering $X$ and leaving $Z$. When applied to our model, it yields
    \begin{enumerate}
        \item $p(Y \mid do(\alpha)) = p(Y \mid \alpha)$ (because $Y \perp \alpha$ in $\mathcal{G}_{\underline{\alpha}}$);
        \item $p(\alpha \mid do(\beta)) = p(\alpha \mid \beta)$ (because $\alpha \perp \beta$ in $\mathcal{G}_{\underline{\beta}}$); and
        \item $p(Y \mid do(\alpha), do(\beta)) = p(Y \mid \alpha, \beta)$ (because $Y \perp (\alpha, \beta)$ in $\mathcal{G}_{\underline{\alpha},\underline{\beta}}$).
    \end{enumerate}
    %
    
    Starting with the definition of the information flow from $\alpha$ to $Y$ imposing $\beta$, we have that
    \begin{align*}
        I(\alpha \to Y \mid do(\beta)) &= \E_{\beta} \left[ \int_{\alpha} p(\alpha \mid do(\beta)) \int_{Y} p(Y \mid do(\alpha), do(\beta)) \right.\\
        &\qquad\qquad \times \left. \log \frac{p(Y \mid do(\alpha), do(\beta))}{\int_{a'} p(\alpha=a' \mid do(\beta)) p(Y \mid do(\alpha=a'), do(\beta))} \right] dY d\alpha \\
        &= \E_{\beta} \left[ \int_{\alpha} p(\alpha \mid \beta) \int_{Y} p(Y \mid \alpha, \beta) \right.\\
        &\qquad\qquad \times \left. \log \frac{p(Y \mid \alpha, \beta)}{\int_{a'} p(\alpha=a' \mid \beta) p(Y \mid \alpha=a', \beta)} \right] dY d\alpha \\
        &= \E_{\beta} \left[ \int_{\alpha,Y} p(Y, \alpha \mid \beta) \log \frac{p(Y \mid \alpha, \beta)}{p(Y \mid \beta)} \right] dY d\alpha \\
        &= \int_{\beta} p(\beta) \int_{\alpha,Y} p(Y, \alpha \mid \beta) \log \frac{p(Y \mid \alpha, \beta)}{p(Y \mid \beta)} dY d\alpha d\beta \\
        &= \int_{\beta} p(\beta) \int_{\alpha,Y} p(Y, \alpha \mid \beta) \log \frac{p(Y \mid \alpha, \beta) p(\alpha \mid \beta)}{p(Y \mid \beta) p(\alpha \mid \beta)} dY d\alpha d\beta \\
        &= \int_{\beta} p(\beta) \int_{\alpha,Y} p(Y, \alpha \mid \beta) \log \frac{p(Y, \alpha \mid \beta)}{p(Y \mid \beta) p(\alpha \mid \beta)} dY d\alpha d\beta \\
        &= I(\alpha; Y \mid \beta).
    \end{align*}
\end{proof}

Proposition \ref{prop:information-flow} follows from Proposition \ref{prop:information-flow-imposing} by imposing the null set.

\subsection{Proof of Proposition \ref{prop:lingauss/single-hyperplane}}
\label{sec:supp/proofs/lingauss-single-hyperplane}

With $K = 1$ we can decompose $\CausalEffect$ as
\begin{equation}
    \label{eq:causal-effect-decomposition-proof}
    \CausalEffect = I(Y ; \alpha) = H(Y) - H(Y \mid \alpha).
\end{equation}
where $H$ denotes entropy of a discrete random variable \cite{cover2006elements}. First consider the entropy term $H(Y)$. From the illustrations of $p(\Xhat \mid \alpha)$ in Figure \ref{fig:lingauss-single-hyperplane}, we can see in $\R^2$ that this entropy is constant for all values of $w_\alpha$ and $w_\beta$: regardless of their angle and offsets, the aggregate set of distributions $p(\Xhat \mid \alpha)$ is symmetric about the origin and so the probability mass of $p(\Xhat)$ is spread symmetrically across both sides of the decision boundary. This idea is generalized in the following lemma, which shows that $H(Y)$ is equal to $\log(2) \approx 0.69$ nats for all values of $W$:
\begin{lemma}
    \label{lem:ent}
    Under the conditions of Propsition \ref{prop:lingauss/single-hyperplane}, $H(Y)=\log(2)$ nats for all $W \in \R^{N \times N}$.
\end{lemma}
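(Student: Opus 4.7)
The plan is to show $P(Y=1) = 1/2$ using a symmetry argument, from which $H(Y) = \log 2$ nats follows immediately since $Y$ is binary.

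First I would observe that since $(\alpha,\beta) \sim \mathcal{N}(0,I)$ and $\varepsilon \sim \mathcal{N}(0,\gamma I)$ are independent zero-mean Gaussians, the generated sample $\Xhat = W_\alpha \alpha + W_\beta \beta + \varepsilon$ is itself a zero-mean Gaussian random vector, with covariance $WW^T + \gamma I$. In particular, the scalar projection $a^T \Xhat$ onto the classifier normal is a one-dimensional zero-mean Gaussian, with some variance $\tau^2 = a^T(WW^T + \gamma I)a > 0$; call its density $\phi_\tau$.

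Next I would compute $P(Y = 1) = \E[\sigma(a^T \Xhat)] = \int \sigma(z) \phi_\tau(z)\, dz$. The key observation is that $\sigma$ is the standard normal CDF, which satisfies $\sigma(z) + \sigma(-z) = 1$ for all $z \in \R$, and that $\phi_\tau$ is symmetric about $0$, so $a^T\Xhat \stackrel{d}{=} -a^T\Xhat$. Combining these two symmetries yields
\begin{equation*}
    \E[\sigma(a^T\Xhat)] = \E[\sigma(-a^T\Xhat)] = \E[1 - \sigma(a^T\Xhat)] = 1 - \E[\sigma(a^T\Xhat)],
\end{equation*}
from which $P(Y=1) = \E[\sigma(a^T\Xhat)] = 1/2$. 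Consequently $P(Y=0) = 1/2$ as well, and the entropy of the resulting Bernoulli random variable is $H(Y) = -\tfrac12 \log \tfrac12 - \tfrac12 \log \tfrac12 = \log 2$ nats, independent of $W$.

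I do not anticipate any substantive obstacle here. The only thing to verify carefully is that the argument does not rely on any assumption about the structure of $W$ beyond producing a zero-mean Gaussian $\Xhat$ (which holds for any $W \in \R^{N \times N}$) and that $\tau^2 > 0$ so that $\phi_\tau$ is a nondegenerate density (guaranteed by $\gamma > 0$ and $a \neq 0$). The assumption that the columns of $W$ are normalized to magnitude $\sqrt{1-\gamma}$ is not actually used in this lemma — it becomes relevant only when comparing different $W$'s in the conditional entropy term $H(Y \mid \alpha)$ in the main proof of Proposition \ref{prop:lingauss/single-hyperplane}.
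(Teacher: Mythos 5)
Your proof is correct and follows essentially the same route as the paper's: both reduce to showing $p(Y=1)=\E[\sigma(a^T\Xhat)]=1/2$ by noting that $a^T\Xhat$ is a symmetric zero-mean Gaussian and exploiting the odd symmetry of $\sigma(\cdot)-1/2$ (equivalently, $\sigma(z)+\sigma(-z)=1$). Your closing remark that the column-normalization assumption is not needed here is also consistent with the paper, whose Lemma \ref{lem:ent} is stated for all $W \in \R^{N\times N}$.
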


\begin{proof}
    Since $(\alpha,\beta) \sim \mathcal{N}(0,I)$, we have
    $\Xhat \sim \mathcal{N}(0,WW^T + \gamma I)$. Letting $U = a^T X$, we have $U\sim \mathcal{N}(0,a^T(WW^T + \gamma I)a)$ which we note has an even probability density function. Considering the classifier output probability marginalized over the generated inputs $\Xhat$, we have
    \begin{align*}
        p(Y=1) &= \E_{\Xhat}[p(Y=1 \mid \Xhat)]
        \\ &= \E_{\Xhat}[\sigma(a^T \Xhat)]
        \\ &= \E_U [\sigma(U)]
        \\ &= \E_U[\sigma(U)-0.5] + 0.5
        \\ &\overset{(\star)}{=} 0.5
    \end{align*}
    where in ($\star$) we use the fact that since $U$ has an even probability density and $\sigma(U)-0.5$ is an odd function, we have that $\E_U[\sigma(U)-0.5] = 0$. Letting $h_b(p)=-(p \log p + (1-p) \log (1-p))$ denote the binary entropy function, we have that $H(\widehat{Y}) = h_b(p(\widehat{Y}=1))=h_b(0.5)=\log(2)$ nats.
\end{proof}

We now consider the second term in \eqref{eq:causal-effect-decomposition-proof}, the conditional entropy $H (Y \mid \alpha)$. In $\R^2$ (Figure \ref{fig:lingauss-single-hyperplane}), this term corresponds to the average over $\alpha$ of the classification entropies for each distribution $p(\Xhat \mid \alpha)$ (depicted as individual ellipses). Intuitively, this entropy is small when many of the conditional distributions $p(\Xhat \mid \alpha)$ lie almost entirely on a single side of the decision boundary (corresponding to high classifier output agreement within each distribution, and therefore low entropy). The orientation of $w_\beta$ can reduce this term by rotating the data distributions so that their \emph{minor}, not \emph{major} axes cross the classifier, reducing the variance of classifier outputs in $\Xhat \mid \alpha$ for each unique $\alpha$. The orientation of $w_\alpha$ can reduce this term by moving the distributions $p(\Xhat \mid \alpha)$ away from the decision boundary (where disagreement in corresponding $Y$ values is lower) as quickly as possible as $\abs{\alpha}$ increases.

\begin{lemma}
    \label{lem:condEnt}
    Let $W = \begin{bmatrix} w_\alpha & W_\beta \end{bmatrix}$, for $w_\alpha \in \R^N$ and $W_\beta \in \R^{N \times (N-1)}$. Suppose that each column $w_i$ of $W$ is bounded by $c>0$, i.e., $\norm{w_i}_2 \le c$. Then under the conditions of Proposition \ref{prop:lingauss/single-hyperplane}, $H(Y \mid \alpha)$ is minimized when $w_\alpha=\pm c \frac{a}{\norm{a}_2}$ and $W_{\beta}^T a = 0$.
\end{lemma}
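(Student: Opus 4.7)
The plan is to reduce $H(Y\mid\alpha)$ to a scalar function $F(s)$, show that $F$ is strictly decreasing in $|s|$, and then maximize $|s|$ subject to the column-norm constraints to identify the minimizer.

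First, since $\Xhat \mid \alpha \sim \N(w_\alpha \alpha,\, W_\beta W_\beta^T + \gamma I)$, the classifier's argument satisfies $a^T \Xhat \mid \alpha \sim \N(a^T w_\alpha\,\alpha,\, \norm{W_\beta^T a}_2^2 + \gamma \norm{a}_2^2)$. Applying the standard identity $\E_{U \sim \N(\mu,\tau^2)}[\sigma(U)] = \sigma(\mu/\sqrt{1+\tau^2})$, which is valid because $\sigma$ is the standard normal CDF, yields
\[
p(Y=1\mid\alpha) \;=\; \sigma(s\alpha), \qquad s \;=\; \frac{a^T w_\alpha}{\sqrt{1 + \norm{W_\beta^T a}_2^2 + \gamma \norm{a}_2^2}},
\]
so $H(Y\mid\alpha) = \E_{\alpha \sim \N(0,1)}[h_b(\sigma(s\alpha))] =: F(s)$, where $h_b$ is the binary entropy function from the proof of Lemma~\ref{lem:ent}.

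The next step is to show that $F$ depends only on $|s|$ and is strictly decreasing on $[0,\infty)$. Evenness follows because $g(x):=h_b(\sigma(x))$ is even, using $\sigma(-x)=1-\sigma(x)$ together with $h_b(p)=h_b(1-p)$. For strict monotonicity I would differentiate under the integral and symmetrize about $\alpha=0$ to obtain $F'(s) = 2\int_0^\infty \alpha\, g'(s\alpha)\, \phi(\alpha)\, d\alpha$, where $\phi$ is the standard normal density. For $x>0$ we have $\sigma(x)>1/2$, so $h_b'(\sigma(x))<0$, while $\sigma'(x)>0$ always; hence $g'(x)<0$ on $(0,\infty)$ and the integrand is strictly negative whenever $s>0$, giving $F'(s)<0$ on $(0,\infty)$.

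Finally, minimizing $F(s)$ reduces to maximizing $|s|$ over admissible $(w_\alpha, W_\beta)$. Cauchy--Schwarz gives $|a^T w_\alpha| \le \norm{w_\alpha}_2 \norm{a}_2 \le c\norm{a}_2$, with equality iff $w_\alpha = \pm c\,a/\norm{a}_2$, and $\norm{W_\beta^T a}_2^2 \ge 0$ with equality iff $W_\beta^T a = 0$; the remaining term $\gamma\norm{a}_2^2$ does not depend on $(w_\alpha, W_\beta)$. These two conditions are jointly achievable because the $N-1$ columns of $W_\beta$ need only lie in the $(N-1)$-dimensional subspace $a^\perp$ while respecting the norm bound, completing the proof. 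I expect the main obstacle to be the monotonicity of $F$; once that is in hand, the constrained maximization of $|s|$ is a routine application of Cauchy--Schwarz.
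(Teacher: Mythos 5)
Your proposal is correct and follows essentially the same route as the paper's proof: compute $p(Y=1\mid\alpha)=\sigma(s\alpha)$ via the Gaussian-CDF averaging identity, reduce the problem to monotonicity of the binary entropy in $\abs{s}$, and maximize $\abs{s}$ by Cauchy--Schwarz together with $a^T W_\beta W_\beta^T a \ge 0$. The only (immaterial) difference is that the paper establishes monotonicity pointwise in $\alpha=t$ using the evenness of $h_b$ about $p=1/2$ and then passes to the expectation, whereas you differentiate the averaged quantity $F(s)$ under the integral; both arguments are valid.
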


\begin{proof}
We have $\Xhat = w_\alpha \alpha + W_\beta \beta + \varepsilon$ with $\varepsilon \sim \mathcal{N}(0, \gamma I)$. For fixed $\alpha$, $p(\Xhat \mid \alpha) = \mathcal{N}(w_\alpha \alpha, W_\beta W_\beta^T + \gamma I)$. Defining $U = a^T X$, we have $U \mid \alpha \sim \mathcal{N}(\alpha a^T w_\alpha, a^T W_\beta W_\beta^T a + \gamma \norm{a}_2^2)$. Then,
\begin{align*}
    p(\widehat{Y}=1 \mid \alpha) &= \E_{\Xhat \mid \alpha} [p(\widehat{Y}=1 \mid \Xhat, \alpha)]
    \\ &= \E_{\Xhat \mid \alpha} [p(\widehat{Y}=1 \mid \Xhat)]
    \\ &= \E_{\Xhat \mid \alpha} [\sigma(a^T X)]
    \\ &= \E_{U \mid \alpha} [\sigma(U)]
    \\ &\overset{(\star)}{=} \sigma\left(\frac{\alpha \ip{a}{w_\alpha}}{\sqrt{1+a^T W_\beta W_\beta^T a + \gamma \norm{a}_2^2}}\right),
\end{align*}
where ($\star$) follows from the fact that for $Z \sim \mathcal{N}(\mu, \sigma^2)$, $\E_Z[\sigma(Z)] = \sigma\left(\frac{\mu}{\sqrt{1+ \sigma^2}}\right)$.

We can now evaluate the entropy $H(Y \mid \alpha) = \E_{t \sim \alpha} [H(Y \mid \alpha = t)]$. Again denoting the binary entropy function by $h_b$, we have
\begin{align*}
    H(Y \mid \alpha = t) &= h_b(p(Y=1 \mid \alpha = t)) \\
    &= h_b(\sigma(s))\quad\text{where}~ s \coloneqq \frac{t \ip{a}{w_\alpha}}{\sqrt{1+a^T W_\beta W_\beta^T a + \gamma \norm{a}_2^2}} \\
    &= h_b((\sigma(s) - 0.5) + 0.5).
    \intertext{Let $q \coloneqq p - 0.5$ and define $\widetilde{h_b}(q) = h_b(q + 0.5)$ for $q \in [-0.5,0.5]$ so that $\widetilde{h_b}$ is an even function. Therefore, $\widetilde{h_b}(q) = \widetilde{h_b}(\abs{q})$, and we have $h_b(p) = \widetilde{h_b}(p - 0.5) = \widetilde{h_b}(\abs{p - 0.5})$. Applying this fact yields}
    &= \widetilde{h_b}(\sigma(s) - 0.5) \\
    &= \widetilde{h_b}(\abs{\sigma(s) - 0.5}) \\
    &\overset{(\dagger)}{=} \widetilde{h_b}(\abs{\sigma(\abs{s}) - 0.5})
\end{align*}
where ($\dagger$) follows since $\abs{\sigma(s) - 0.5}$ is an even function of $s$. On $\R_{\ge 0}$ we have that $\widetilde{h_b}(\cdot)$ is a monotonically decreasing function and $\abs{\sigma(\cdot) - 0.5}$ is a monotonically increasing function, and therefore $H(Y \mid \alpha = t) = \widetilde{h_b}(\abs{\sigma(\abs{s}) - 0.5})$ is a monotonically decreasing function of $\abs{s}$ where
\begin{equation}
    \abs{s} = \frac{\abs{t} \abs{\ip{a}{w_\alpha}}}{\sqrt{1+a^T W_\beta W_\beta^T a + \gamma \norm{a}_2^2}}.
    \label{eq:Harg}
\end{equation}
For any value of $t$, it is clear that the expression in \eqref{eq:Harg} is maximized (and therefore $H(Y \mid \alpha = t)$ is minimized) with respect to $w_\alpha$ and $W_\beta$ when both $\abs{\ip{a}{w_\alpha}}$ is maximized and $a^T W_\beta W_\beta^T a$ is minimized. By the Cauchy-Schwarz inequality and from boundedness of the column magnitudes of $W$ by $c$, we have that $\abs{\ip{a}{w_\alpha}}$ is maximized at $w_\alpha=\pm c \frac{a}{\norm{a}_2}$. Since $a^T W_\beta W_\beta^T a \ge 0$, this quadratic term is minimized at $W_\beta^Ta = 0$ in which case $a^T W_\beta W_\beta^T a = 0$.

Since choosing $w_\alpha$ and $W_\beta$ in this way minimizes $H(Y \mid \alpha = t)$ for any $t$, we have that $H(Y \mid \alpha) = \E_{t \sim \alpha}[H(Y \mid \alpha = t)]$ is also minimized with this choice of $w_\alpha$ and $W_\beta$. 
\end{proof}

Since $H(Y)$ is constant for any $W$ (Lemma \ref{lem:ent}), we have that the conditions on $W$ described in Lemma \ref{lem:condEnt} maximize $\CausalEffect=I(\alpha;Y)$. We combine this result with the following lemma to characterize the minimum of the entire objective \eqref{eq:objective}:

\begin{lemma}
    \label{lem:KLgauss}
   Suppose that $\varepsilon<1$, $W \in \R^{N \times N}$, and that $X,\varepsilon,z \sim \mathcal{N}(0,I)$ in $\R^N$. With $U = Wz + \gamma \varepsilon$, $\mathrm{D}_{\mathrm{KL}}(p(X) ~\Vert~ p(U))$ is minimized by any orthogonal $W$ with columns normalized to magnitude $\sqrt{(1-\gamma)}$.
\end{lemma}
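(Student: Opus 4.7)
The plan is to exploit the fact that both $X$ and $U$ are zero-mean multivariate Gaussians, so their KL divergence has a closed form that depends only on the spectrum of $WW^T$. This reduces the matrix-valued optimization to a separable scalar minimization problem.

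First, I would compute $U \sim \mathcal{N}(0, \Sigma)$, where $\Sigma = WW^T + \gamma I$ (treating the additive noise covariance as $\gamma I$, matching the setup of Proposition~\ref{prop:lingauss/single-hyperplane}). Applying the standard closed-form expression for the KL divergence between two zero-mean Gaussians,
\begin{equation*}
    \mathrm{D}_{\mathrm{KL}}(p(X) \,\|\, p(U)) = \tfrac{1}{2}\bigl(\mathrm{tr}(\Sigma^{-1}) - N + \log\det \Sigma\bigr).
\end{equation*}
The key observation is that this expression depends on $W$ only through $WW^T$, so we can parameterize the optimization by the eigenvalues $\{\lambda_i\}_{i=1}^N$ of $WW^T$, which must satisfy $\lambda_i \geq 0$.

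Second, I would diagonalize $WW^T = V \Lambda V^T$, so that $\Sigma$ has eigenvalues $\{\lambda_i + \gamma\}$. The objective then decouples as
\begin{equation*}
    \mathrm{D}_{\mathrm{KL}}(p(X) \,\|\, p(U)) = \tfrac{1}{2}\sum_{i=1}^N \left(\frac{1}{\lambda_i + \gamma} + \log(\lambda_i + \gamma)\right) - \tfrac{N}{2}.
\end{equation*}
Each summand is of the form $f(\mu) = \mu^{-1} + \log \mu$ for $\mu = \lambda_i + \gamma > 0$. Since $f'(\mu) = -\mu^{-2} + \mu^{-1}$, the unique minimum of $f$ on $\R_{>0}$ occurs at $\mu = 1$, i.e., $\lambda_i = 1-\gamma$. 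The hypothesis $\gamma < 1$ guarantees this is a feasible nonnegative eigenvalue.

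Finally, I would translate the eigenvalue condition back to $W$. The optimum $\lambda_i = 1-\gamma$ for every $i$ means $WW^T = (1-\gamma)I$, which for square $W$ is equivalent to $W$ being orthogonal with all column norms equal to $\sqrt{1-\gamma}$. The choice of eigenbasis $V$ is arbitrary, so the minimizer is not unique — any such orthogonal $W$ achieves the minimum. The step requiring the most care is confirming that the separable scalar minima can all be attained simultaneously by a valid positive semidefinite $WW^T$; this is immediate since $(1-\gamma)I$ is positive semidefinite under $\gamma < 1$, so there is no real obstacle. The only subtle point is ensuring that the factorization $WW^T$ is achievable for any square $W$, which follows from a Cholesky or spectral factorization.
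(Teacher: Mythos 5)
Your proposal is correct and follows essentially the same route as the paper's proof: write $U \sim \mathcal{N}(0, WW^T + \gamma I)$, use the closed-form Gaussian KL divergence to reduce the problem to $\log\det\Sigma + \mathrm{tr}(\Sigma^{-1})$, diagonalize, minimize each scalar term $1/\mu + \log\mu$ at $\mu = 1$, and conclude $WW^T = (1-\gamma)I$. Your explicit check that $\lambda_i = 1-\gamma \ge 0$ is feasible under $\gamma < 1$ is a small point the paper leaves implicit, but the argument is otherwise identical.
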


\begin{proof}
    Noting that $U \sim \mathcal{N}(0,WW^T + \gamma I)$, we have from a standard result on KL divergence between multivariate normal distributions that
    \begin{equation}
        \argmin_{W} \mathrm{D}_{\mathrm{KL}}(p(X) ~\Vert~ p(U)) = \argmin_{W} \log \abs{WW^T + \gamma I} + \mathrm{tr}((WW^T + \gamma I)^{-1}).
        \label{eq:KLmin}
    \end{equation}
    
    Since $WW^T + \gamma I$ is positive definite, there exists orthogonal $V$ and diagonal $\Lambda$ with positive entries $\{\lambda_i\}_{i=1}^N$ such that $W W^T + \gamma I = V \Lambda V^T$. We then have
    \begin{align}
        \log \abs{WW^T + \gamma I} + \mathrm{tr}((WW^T + \gamma I)^{-1}) &= \log \abs{V \Lambda V^T} + \mathrm{tr}((V \Lambda V^T)^{-1})\notag
        \\ &= \log \abs{\Lambda} + \mathrm{tr}( \Lambda^{-1})\notag
        \\ &= \sum_i \log \lambda_i + \frac{1}{\lambda_i}.
        \label{eq:sumlam}
    \end{align}
    \eqref{eq:sumlam} is minimized at $\lambda_i=1$ for all $i$. Therefore, the minimizer of \eqref{eq:KLmin} is characterized by $WW^T = VV^T - \gamma I = (1-\gamma)I$. Any orthogonal $W$ with column magnitudes equal to $\sqrt{1 - \gamma}$ satisfies this condition.
\end{proof}

Combining these lemmas, consider the solution $w_\alpha = \sqrt{1-\gamma} \frac{a}{\norm{a}_2}$, and $W_\beta$ with orthogonal, $\sqrt{1-\gamma}$-norm columns satisfying $W_\beta^T a = 0$. From Lemma \ref{lem:condEnt} we have that this solution minimizes $H(Y \mid \alpha)$ within the class of $N \times N$ matrices whose column magnitudes are bounded by $\sqrt{1-\gamma}$. Combined with the invariance of $H(Y)$ to $W$ (Lemma \ref{lem:ent}), we have that $I(\alpha;Y)$ is maximized by this choice of $w_\alpha$ and $W_\beta$. From Lemma \ref{lem:KLgauss} we have that this solution also minimizes $\DataFidelity=\mathrm{D}_{\mathrm{KL}}(p(X) ~\Vert~ p(U))$, and thus this solution minimizes the objective \eqref{eq:objective} for any $\lambda>0$.

\subsection{Proof of Proposition \ref{prop:information-flow-relationships}}
\label{sec:supp/proofs/information-flow-relationships}

Proposition \ref{prop:information-flow-relationships} states the relationships between information flow-based objectives depicted graphically in Figure \ref{fig:information-flow-relationships}.

\begin{proof}[Proof of (a)]
    We have that
    \begin{align*}
        I(Y; \alpha) &= \frac1K \sum_{i=1}^K I(Y; \alpha_1, \dots, \alpha_K) \\
        &= \frac1K \sum_{i=1}^K \left[ I(Y; \alpha_i) + I(Y; \alpha_{\neg i} \mid \alpha_i) \right] \\
        &= \frac1K \sum_{i=1}^K I(Y; \alpha_i) + \frac1K \sum_{i=1}^K I(Y; \alpha_{\neg i} \mid \alpha_i).
    \end{align*}
\end{proof}

\begin{proof}[Proof of (b)]
    First, note that
    \begin{align*}
        I(X;Y \mid Z,W) &= \int_x \int_y \int_z \int_w p(x,y,z,w) \log \frac{p(x,y \mid z,w)}{p(x \mid z,w) p(y \mid z,w)} dx dy dz dw \\
        &= \int_x \int_y \int_z \int_w p(x,y,z,w) \log \frac{p(x,y,z,w) / p(z,w)}{p(x,z,w) / p(z,w) p(y,z,w) / p(z,w)} dx dy dz dw \\
        &= \int_x \int_y \int_z \int_w p(x,y,z,w) \log \frac{p(x,y,z,w) p(z,w) p(x,w)}{p(x,z,w) p(y,z,w) p(x,w)} dx dy dz dw \\
        &= \int_x \int_y \int_z \int_w p(x,y,z,w) \log \frac{p(x,y,z \mid w) p(z \mid w) p(x \mid w)}{p(x,z \mid w) p(y,z \mid w) p(x \mid w)} dx dy dz dw \\
        &= \int_x \int_y \int_z \int_w p(x,y,z,w) \left( \log \frac{p(x,y,z \mid w)}{p(x \mid w) p(y,z \mid w)} \right. \\
        &\qquad\qquad \left. - \log \frac{p(x,z \mid w)}{p(x \mid w) p(z \mid w)} \right) dx dy dz dw \\
        &= I(X; Y, Z \mid W) - \E_{Y}\left[I(X; Z \mid W)\right] \\
        &= I(X; Y, Z \mid W) - I(X; Z \mid W).
    \end{align*}
    Applying this identity,
    \begin{align*}
        I(Y; \alpha \mid \beta) &= \frac1K \sum_{i=1}^K I(Y; \alpha_i, \alpha_{\neg i} \mid \beta) \\
        &= \frac1K \sum_{i=1}^K \left[ I(Y; \alpha_i \mid \alpha_{\neg i}, \beta) + I(Y; \alpha_{\neg i} \mid \beta) \right] \\
        &= \frac1K \sum_{i=1}^K I(Y; \alpha_i \mid \alpha_{\neg i}, \beta) + \frac1K \sum_{i=1}^K I(Y; \alpha_{\neg i} \mid \beta).
    \end{align*}
\end{proof}

\begin{proof}[Proof of (c)]
    We have that
    \begin{align*}
        I(Y; \alpha \mid \beta) &= \int_{Y} \int_{\alpha} \int_{\beta} p(Y, \alpha, \beta) \log \frac{p(Y, \alpha \mid \beta)}{p(Y \mid \beta) p(\alpha \mid \beta)} dY d\alpha d\beta \\
        &\overset{(\star)}{=} \int_{Y} \int_{\alpha} \int_{\beta} p(Y, \alpha, \beta) \log \frac{p(\beta \mid Y, \alpha) p(Y, \alpha)}{p(\beta)} \frac{p(\beta)}{p(\beta \mid Y) p(Y)} \frac{p(\beta)}{p(\beta \mid \alpha) p(\alpha)} dY d\alpha d\beta \\
        &\overset{(\star\star)}{=} \int_{Y} \int_{\alpha} \int_{\beta} p(Y, \alpha, \beta) \log \frac{p(\beta \mid Y, \alpha) p(Y, \alpha)}{p(\beta)} \frac{p(\beta)}{p(\beta \mid Y) p(Y)} \frac{p(\beta)}{p(\beta) p(\alpha)} dY d\alpha d\beta \\
        &= \int_{Y} \int_{\alpha} \int_{\beta} p(Y, \alpha, \beta) \log \frac{p(Y,\alpha)}{p(Y)p(\alpha)} \frac{p(\beta \mid Y, \alpha)}{p(\beta \mid Y)} dY d\alpha d\beta \\
        &= \int_{Y} \int_{\alpha} \int_{\beta} p(Y, \alpha, \beta) \log \frac{p(Y,\alpha)}{p(Y)p(\alpha)} \frac{p(\beta \mid Y, \alpha) p(\alpha \mid Y)}{p(\beta \mid Y) p(\alpha \mid Y)} dY d\alpha d\beta \\
        &= \int_{Y} \int_{\alpha} \int_{\beta} p(Y, \alpha, \beta) \log \frac{p(Y,\alpha)}{p(Y)p(\alpha)} \frac{p(\alpha,\beta \mid Y)}{p(\alpha \mid Y) p(\beta \mid Y)} dY d\alpha d\beta \\
        &= \int_{Y} \int_{\alpha} \int_{\beta} p(Y, \alpha, \beta) \left( \log \frac{p(Y,\alpha)}{p(Y)p(\alpha)} + \log \frac{p(\alpha,\beta \mid Y)}{p(\alpha \mid Y) p(\beta \mid Y)} \right) dY d\alpha d\beta \\
        &= I(Y; \alpha) + I(\alpha ; \beta \mid Y),
    \end{align*}
    where ($\star$) follows from Bayes' rule and ($\star\star$) follows from the independence of $\alpha$ and $\beta$ in our model.
\end{proof}

\begin{proof}[Proof of (d)]
    Similar to (c).
\end{proof}
\clearpage

\section{Sample-based estimate of causal influence}
\label{sec:supp/sampling}
Here we detail the sampling procedure for approximating the causal objective in \eqref{eq:causal-effect}. (The variants described in Appendix \ref{sec:supp/causal-obj-variants} can be approximated in similar fashion.) We have
\begin{equation*}
    \CausalEffect(\alpha; Y) = I(\alpha; Y) = \int_{\alpha} p(\alpha) \left(\sum_{y} p(y \mid \alpha)\log p(y \mid \alpha)\right)d\alpha - \sum_{y} p(y)\log p(y)
\end{equation*}
where
\begin{equation}
    p(y \mid \alpha) = \int_{\beta} \int_x p(y \mid x) p(x \mid \alpha, \beta) p(\beta) dx d\beta
    \label{eq:sample-estimate-pygivenalpha}
\end{equation}
and
\begin{equation}
    p(y) = \int_{\alpha,\beta} \int_x p(y \mid x) p(x \mid \alpha,\beta) p(\alpha) p(\beta) dx d\alpha d\beta.
    \label{eq:sample-estimate-py}
\end{equation}

\begin{algorithm}[t]
\caption{Sample-based estimate of $\CausalEffect(\alpha;Y)$}
\label{alg:sample-causal-effect}
\begin{algorithmic}
    \REQUIRE number of samples $N_\alpha$ and $N_\beta$, number of latent factors $K$ and $L$, number of classes $M$
    \STATE $I \leftarrow 0$
    \STATE $\bm{q}_y \leftarrow \mathrm{zeros}(M)$ 
    \FOR {$i=1$ \TO $N_\alpha$}
        \STATE $\alpha \leftarrow$ $K$-dimensional vector sampled from $\mathcal{N}(0,I)$
        \STATE $\bm{p}_{y \mid \alpha} \leftarrow \mathrm{zeros}(M)$ 
        \FOR{$j=1$ \TO $N_\beta$}
            \STATE $\beta \leftarrow$ $L$-dimensional vector sampled from $\mathcal{N}(0,I)$
            \STATE $x \leftarrow$ sample from $p(x \mid \alpha, \beta)$
            \STATE $\bm{p}_{y \mid \alpha} \leftarrow \bm{p}_{y \mid \alpha} + \frac{1}{N_\beta} p(y\mid x)$ (where $p(y \mid x) \in \R^M$ is the classifier probability for each class)
        \ENDFOR
        \STATE $I \leftarrow I + \frac{1}{N_\alpha}\sum_{m=1}^M \bm{p}_{y \mid \alpha}[m] \log \bm{p}_{y \mid \alpha}[m]$
        \STATE $\bm{q}_y \leftarrow \bm{q}_y + \frac{1}{N_\alpha}\bm{p}_{y \mid \alpha}$
    \ENDFOR
    \STATE $I \leftarrow I - \sum_{m=1}^M \bm{q}_y[m] \log \bm{q}_y[m]$
    \ENSURE $I$ (sample-based estimate of $I(\alpha;Y)$)
\end{algorithmic}
\end{algorithm}

For fixed $\alpha$, we approximate \eqref{eq:sample-estimate-pygivenalpha} with $N_x$ and $N_\beta$ samples of $x$ and $\beta$, respectively, as
\begin{equation*}
    p(y \mid \alpha) \approx \frac{1}{N_\beta N_x} \sum_{j=1}^{N_\beta} \sum_{n=1}^{N_x} p(y \mid x^{(n)}),
\end{equation*}
where each $x^{(n)} \sim p(x \mid \alpha, \beta^{(j)})$ and $\beta^{(j)} \sim p(\beta)$. Similarly, we approximate \eqref{eq:sample-estimate-py} with $N_x$, $N_\alpha$, and $N_\beta$ samples of $x$, $\alpha$, and $\beta$, respectively, as
\begin{equation*}
    p(y) \approx \frac{1}{N_\alpha N_\beta N_x} \sum_{j=1}^{N_\beta} \sum_{i=1}^{N_\alpha} \sum_{n=1}^{N_x} p(y \mid x^{(n)}),
\end{equation*}
where each $x^{(n)} \sim p(x \mid \alpha^{(i)},\beta^{(j)})$, $\alpha^{(i)} \sim p(\alpha)$, and $\beta^{(j)} \sim p(\beta)$. Therefore,
\begin{equation*}
\begin{split}
    I(\alpha_i;y) \approx \frac{1}{N_\alpha N_\beta N_x} \left[ \sum_{i=1}^{N_\alpha} \sum_y \left( \sum_{j=1}^{N_\beta} \sum_{n=1}^{N_x} p(y \mid x^{(n)}) \right) \log \left( \frac{1}{N_\beta N_x} \sum_{j=1}^{N_\beta} \sum_{n=1}^{N_x} p(y \mid x^{(n)}) \right) \right. \\
    \qquad - \left. \sum_y \left( \sum_{j=1}^{N_\beta} \sum_{i=1}^{N_\alpha} \sum_{n=1}^{N_x} p(y \mid x^{(n)}) \log \left( \frac{1}{N_\alpha N_\beta N_x} \sum_{j=1}^{N_\beta} \sum_{i=1}^{N_\alpha} \sum_{n=1}^{N_x} p(y \mid x^{(n)}) \right) \right) \right]
\end{split}
\end{equation*}
where each $x^{(n)} \sim p(x \mid \alpha^{(i)},\beta^{(j)})$, $\alpha^{(i)}\sim p(\alpha)$, and $\beta^{(j)}\sim p(\beta)$.

The complete procedure is described algorithmically in Algorithm \ref{alg:sample-causal-effect} with $N_x = 1$.
\clearpage

\section{VAE experimental details and additional results}
\label{sec:supp/vae-details}
\subsection{Details and additional results for MNIST experiments}
\label{sec:supp/vae-details/mnist}

All experiments were run using a single Nvidia GeForce GTX 1080 GPU. The traditional MNIST training set was split into training and validation sets composed of the first 50,000 and remaining 10,000 images, respectively. The testing set was the same as the traditional MNIST testing set, composed of 10,000 images. These sets were down-selected to include only samples with the labels of interest. Input images were scaled so that the network inputs are in $[0,1]^{28 \times 28}$.

\begin{table}[t]
    \centering
    \begin{tabular}{|c|} 
    \hline
    Classifier Architecture \\ 
        \hline
        Input (28$\times$28)   \\ 
        Conv2 (32 channels, 3$\times$3 kernels, stride 1, pad 0) \\
        ReLU \\
        Conv2 (64 channels, 3$\times$3 kernels, stride 1, pad 0) \\
        ReLU \\
        MaxPool (2$\times$2 kernel) \\
        Dropout ($p = 0.5$) \\
        Linear (128 units) \\ 
        ReLU \\
        Dropout ($p = 0.5$) \\
        Linear ($M$ units) \\
        Softmax \\
        \hline
    \end{tabular}
    \caption{Network architecture for MNIST Classifier}
    \label{tab:mnist-classifier-details}
\end{table}

The network architecture for the classifier used in the MNIST experiments is shown in Table~\ref{tab:mnist-classifier-details} where $M$, the number of class outputs, varies depending on the classification task. The classifier was trained with a batch size of 64 and a stochastic gradient descent optimizer with momentum 0.5 and learning rate 0.1. The 3/8 classifier was trained for 20 epochs and the 1/4/9 classifier was trained for 30 epochs.  The test accuracy of the classifier trained on both the 3/8 and 1/4/9 datasets was 99.6\%.

\begin{figure}
    \centering
    \includegraphics[width=\textwidth]{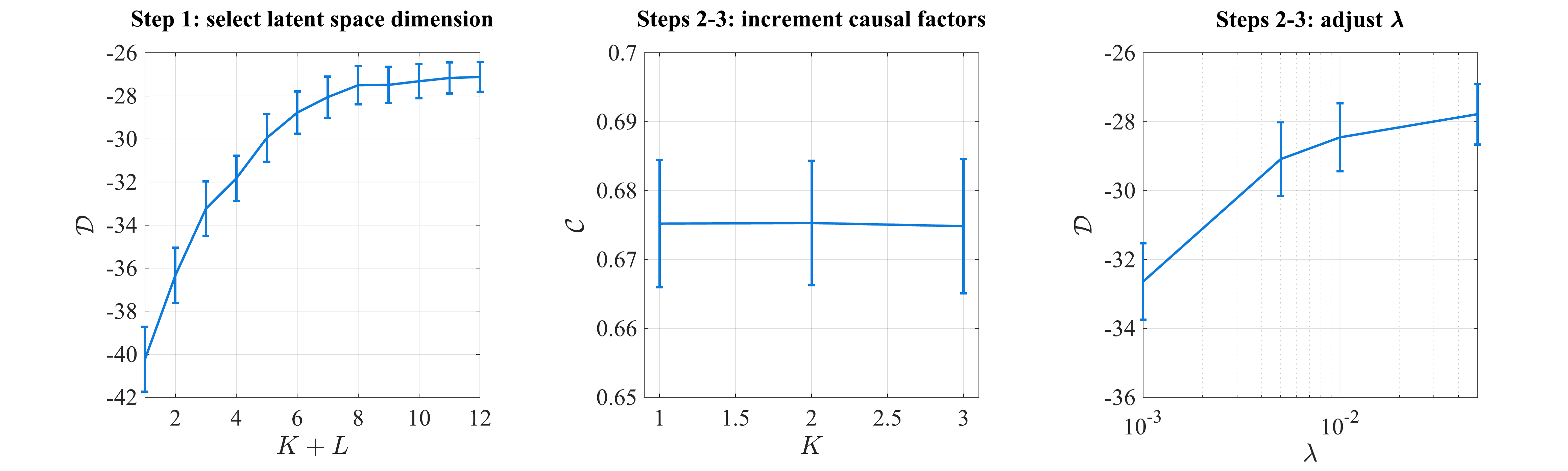}
    \caption{Partial details of parameter tuning procedure used to select $K$, $L$, and $\lambda$ for explaining MNIST 3/8 classifier using Algorithm \ref{alg:parameter-selection}. \emph{Left:} In Step 1 we select the total number of latent factors $K+L$ needed to adequately represent the data distribution. \emph{Center:} In Steps 2-3 we iteratively convert noncausal latent factors to causal latent factors until $\CausalEffect$ plateaus. \emph{Right:} After each increment of $K$, we adjust $\lambda$ to approximately achieve the value of $\DataFidelity$ from Step 1.}
    \label{fig:tuning-mnist}
\end{figure}

\begin{table}[t]
    \centering
    \begin{tabular}{|c|c|} 
        \hline
        VAE Encoder Architecture & VAE Decoder Architecture  \\ 
        \hline
        Input (28$\times$28) & Input ($K+L$) \\ 
        Conv2 (64 chan., 4$\times$4 kernels, stride 2, pad 1) & Linear (3136 units)  \\
        ReLU & ReLU \\
        Conv2 (64 chan., 4$\times$4 kernels, stride 2, pad 1) & Conv2Transp (64 chan., 4$\times$4 kernels, stride 1, pad 1) \\ 
        ReLU & ReLU \\
        Conv2 (64 chan., 4$\times$4 kernels, stride 1, pad 0) & Conv2Transp (64 chan., 4$\times$4 kernels, stride 2, pad 2) \\
        ReLU & ReLU \\
        Linear ($K+L$ units for both $\mu$ and $\sigma$) & Conv2Transp (1 chan., 4$\times$4 kernel, stride 2, pad 1) \\
        & Sigmoid \\
        \hline
    \end{tabular}
    \caption{VAE network architecture used for MNIST and Fashion MNIST experiments.}
    \label{tab:mnist-vae-details}
\end{table}

The VAE architecture used to learn the generative map $g$ is shown in Table \ref{tab:mnist-vae-details}. The objective \eqref{eq:objective} was maximized with 8000 training steps, batch size 64, and learning rate $5 \times 10^{-4}$. At each training step, the causal influence term \ref{eq:causal-effect} was estimated using the sampling procedure in Appendix \ref{sec:supp/sampling} with $N_\alpha = 100$ and $N_\beta = 25$. For experiments with digits 3 and 8, we selected $K = 1$, $L = 7$, and $\lambda = 0.05$ using the parameter selection procedure in Algorithm \ref{alg:parameter-selection}; Figure \ref{fig:tuning-mnist} shows intermediate results from this procedure.

\begin{figure}[t]
    \centering
    \includegraphics{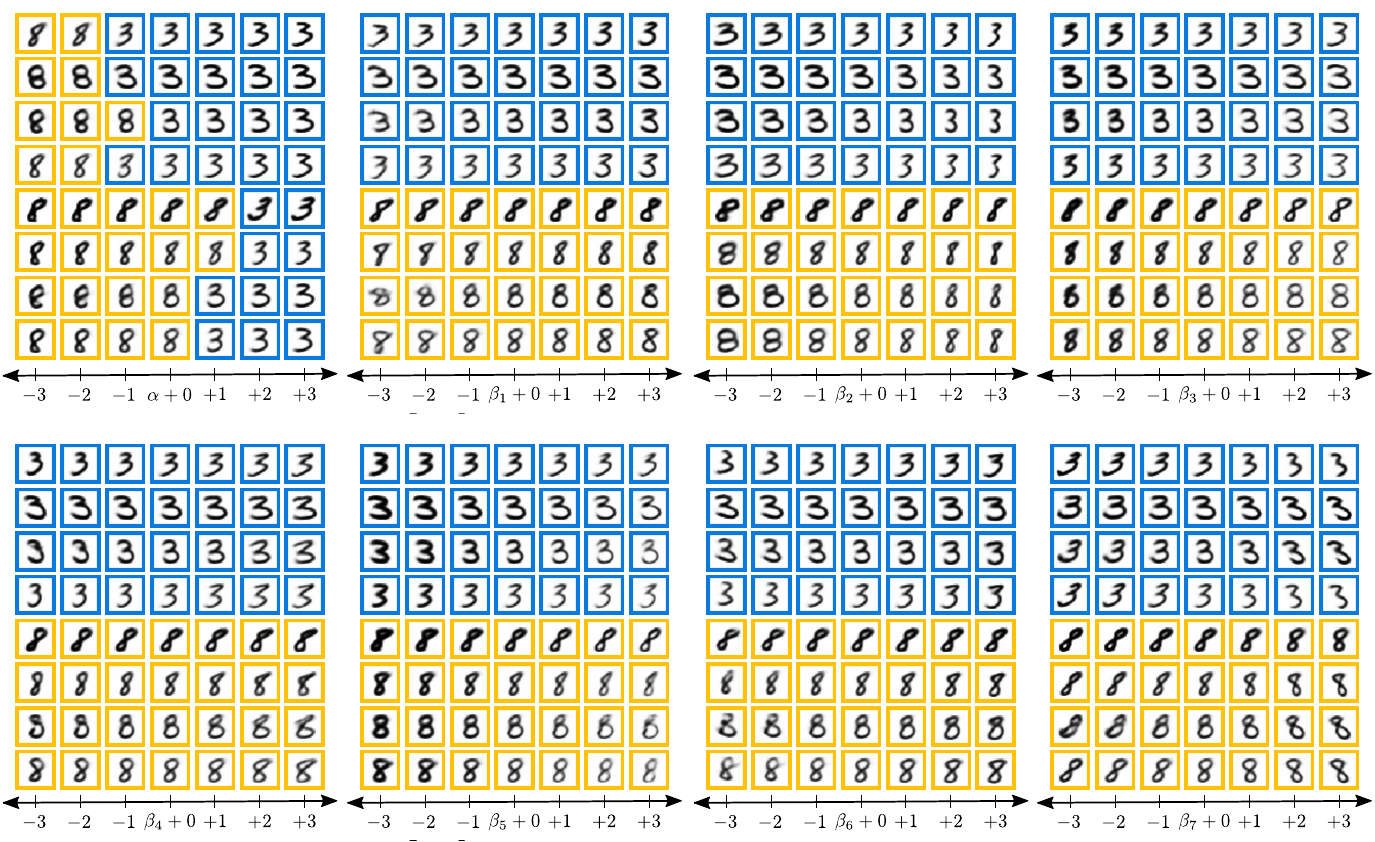}
    \caption{Visualizations for learned latent factors for MNIST 3/8 classifier. Images in the center column of each grid are reconstructed samples from the validation set; moving left or right in each row shows $g(\alpha,\beta)$ as a single latent factor is varied. This plot shows the complete results from Figure \ref{fig:mnist-qual}; it includes sweeps for two additional samples and visualizations of all $L=7$ noncausal factors.}
    \label{fig:mnist-qual-details}
\end{figure}

Figure \ref{fig:mnist-qual-details} shows additional results for the experiment of Figure \ref{fig:mnist-qual}, which visualizes the learned latent factors that explain the MNIST 3/8 classifier. Here we show latent factor sweeps from this experiment with additional data samples and all $K+L=8$ latent factors.

\begin{figure}
    \centering
    \includegraphics{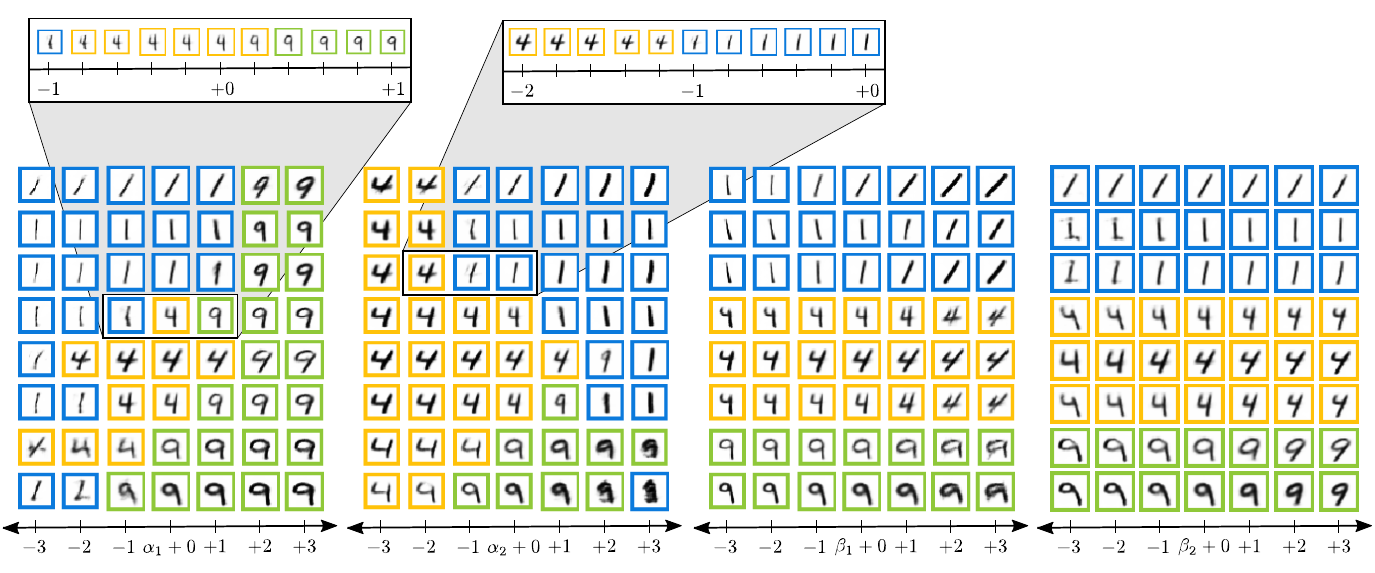}
    \caption{Visualizations of learned latent factors for MNIST 1/4/9 classifier. Images in the center column of each grid are reconstructed samples from the validation set; moving left or right in each row shows $g(\alpha,\beta)$ as a single latent factor is varied. Varying the causal factors $\alpha_1$ and $\alpha_2$ control aspects that affect the classifier output (colored borders); varying the noncausal factors $\beta_1$ and $\beta_2$ affect only stylistic aspects such as rotation and thickness.}
    \label{fig:mnist-149-qual}
\end{figure}

\begin{figure}
    \centering
    \includegraphics{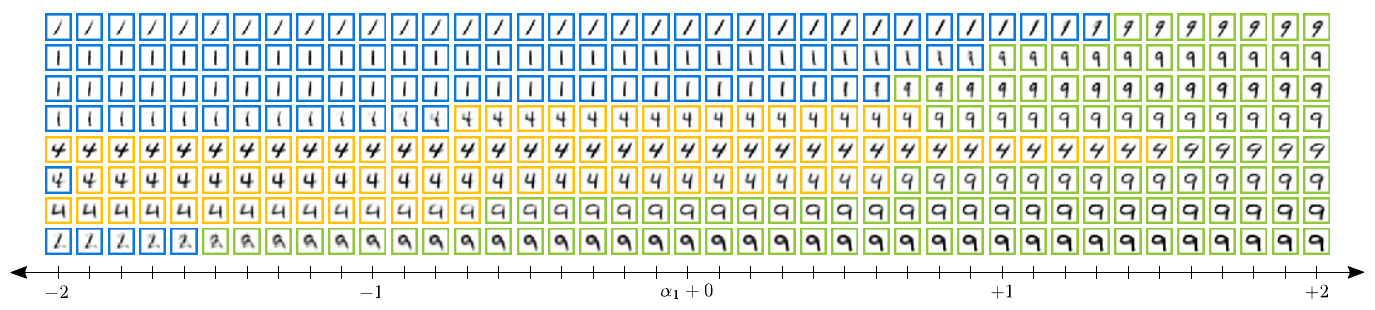}
    \caption{High-resolution transition regions of the first causal factor in explaining the MNIST 1/4/9 classifier. Visualizing high-resolution latent factor sweeps can allow a practitioner to more easily identify which data features correspond to each underlying factor. For example, one can observe in the second row from the bottom how increasing $\alpha_1$ causes the left branch of the digit `4' to smoothly transition into completing the loop of the digit `9' while the digit stem remains fixed.}
    \label{fig:mnist-149-qual-zoomed}
\end{figure}

Figure \ref{fig:mnist-149-qual} shows an explanation of the same classifier architecture detailed in Table~\ref{tab:mnist-classifier-details} trained on the MNIST digits 1, 4, and 9. We use the VAE architecture of Table \ref{tab:mnist-vae-details} with $K = 2$ causal factors, $L = 2$ noncausal factors, and $\lambda = 0.1$, and estimated the causal influence portion of the objective using the sampling procedure in Appendix \ref{sec:supp/sampling} with $N_\alpha = 75$ and $N_\beta = 25$. While the factor sweeps in Figure \ref{fig:mnist-149-qual} provide a high-level indication of the data features each factor corresponds to, a practitioner may also wish to visualize the fine-grained transitions between each class. This can be achieved by sweeping each factor on a finer scale, as visualized by the zoomed in regions of Figure \ref{fig:mnist-149-qual} as well as the more comprehensive sweeps in Figure \ref{fig:mnist-149-qual-zoomed}.

\subsection{Details and additional results for comparison experiments}
\label{sec:supp/vae-details/comparison}

Figure \ref{fig:comparison} compares our latent factor-based local explanations to the local explanations of four popular explanation methods. We generate explanations of the same CNN classifier trained on MNIST 3 and 8 digits described in Appendix \ref{sec:supp/vae-details/mnist}. The data samples explained in Figure \ref{fig:comparison} are the first example of each class in the MNIST validation set.

\textbf{Implementation details of other methods.} The following procedures were used to generate the results for LIME, DeepSHAP, IG, and L2X shown in Figure \ref{fig:comparison} (left):
\begin{itemize}
    \item \textbf{LIME \cite{ribeiro2016why}.} The LIME framework trains a sparse linear model using superpixel features. Following the recommendation in the authors' code, we generate superpixels using the Quickshift segmentation algorithm from scikit-image with kernel size 1, maximum distance 200, and color/image-space proximity ratio 0 (as the MNIST digits are grayscale). The LIME local approximation is fit using the default kernel width of 0.25, 10,000 samples, and $K=10$ features. Figure \ref{fig:comparison} show superpixels identified as contributing positively (red) and or negatively (blue) to the classification decision.
    \item \textbf{DeepSHAP \cite{lundberg2017unified}.} The DeepSHAP method uses the structure of the classifier network to efficiently approximate Shapley values, a game-theoretic formulation for how to optimally distribute rewards to players of a cooperative game. The Shapley values displayed in Figure \ref{fig:comparison} can be interpreted as the (averaged) importance of each pixel for explaining the difference between $f(x)$ and $\E_{x \sim X}[f(x)]$. We train the explanation model using 1000 randomly chosen samples from the training set. The DeepSHAP method produces explanations for each possible class; we display the Shapley values corresponding to the classifier class (i.e., the top image shows the explanation for ground truth class 3 and the bottom image shows the explanation for ground truth class 8).
    \item \textbf{IG \cite{sundararajan2017axiomatic}.} The integrated gradients (IG) method integrates the gradient of the classifier probabilities with respect to the input as the input changes from a ``baseline.'' We use an all-zero image as the baseline and the trapezoid rule with 50 steps to approximate the integral. The output in Figure \ref{fig:comparison} shows the integrated gradient explanation for each input image.
    \item \textbf{L2X \cite{chen2018learning}.} The learning to explain (L2X) algorithm learns a mask of features $S$ that (approximately) maximizes $I(Y ; X \odot S)$. Following \cite[Sec.~4.3]{chen2018learning}, we find a mask with $k = 4$ active superpixels, each of size $4 \times 4$. The neural network parameterizing the ``explainer'' model $p(S \mid X)$ consists of two convolutional layers ($32$ filters of size $2 \times 2$ each with relu activation, each followed by a max pooling layer with a $2 \times 2$ pool size), followed by a single $2 \times 2$ convolutional filter. This explainer network learns a $7 \times 7$ mask, with each element corresponding to a $2 \times 2$ superpixel in data space. The neural network parameterizing the variational bound $q(Y \mid X \odot S)$ consists of two convolutional layers, each containing $32$ filters of size $2 \times 2$, using relu activation, and followed by a max pooling layer with $2 \times 2$ pool size; followed by a dense layer. The networks parameterizing $p(S \mid X)$ and $q(Y \mid X \odot S)$ were trained together with $10$ epochs of the 9943 MNIST training samples of $3$'s and $8$'s and the outputs $Y$ of the convolutional neural network classifier described in Appendix \ref{sec:supp/vae-details/mnist}.
\end{itemize}

\begin{figure}
    \centering
    \includegraphics{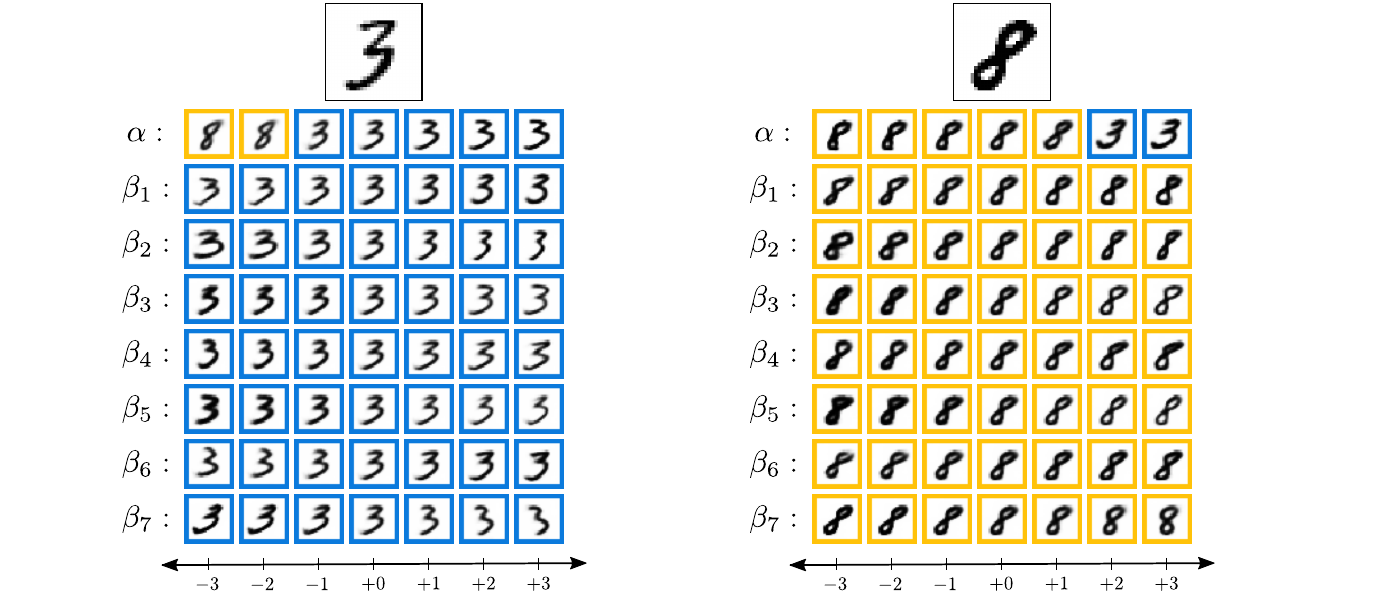}
    \caption{Complete results for local explanations of `3' and `8' from Figure \ref{fig:comparison}. Our explanations are able to differentiate causal aspects (pixels that define 3 from 8) from purely stylistic aspects. Only the causal factor $\alpha$ controls changes in data space that result in a change in classifier output.}
    \label{fig:comparison-detailed}
\end{figure}

\textbf{Complete results for our method.} In Figure \ref{fig:comparison} (right) we show only latent factor sweeps for the the causal factor $\alpha$ and a single noncausal factor $\beta_7$. Figure \ref{fig:comparison-detailed} shows complete local explanations with each noncausal factor. Our explanations use the VAE framework described in Appendix \ref{sec:supp/vae-details/mnist}.

\subsection{Details and additional results for fashion MNIST experiments}
\label{sec:supp/vae-details/fmnist}

Our training set was the same as the traditional Fashion MNIST training set, composed of 60,000 images. The Fashion MNIST testing set was split into validation and testing sets composed of the first 6,000 and last 4,000 images, respectively. These sets were down-selected to include only samples with the labels of interest --- in our experiment, classes 0 (`t-shirt/top'), 3 (`dress'), and 4 (`coat'). Input images were scaled so that the input images were in $[0,1]^{28 \times 28}$. 

The same classifier architecture described in Table \ref{tab:mnist-classifier-details} was used in this experiment. The classifier was trained with 50 epochs, a batch size of 64, a stochastic gradient descent optimizer with momentum 0.5 and learning rate 0.1. Because the classes used (`t-shirt/top,' `dress,' and `coat') are similar, this classifier task is more challenging than the MNIST digit classification task; the test accuracy of the classifier was 95.2\%.

\begin{figure}
    \centering
    \includegraphics[width=\textwidth]{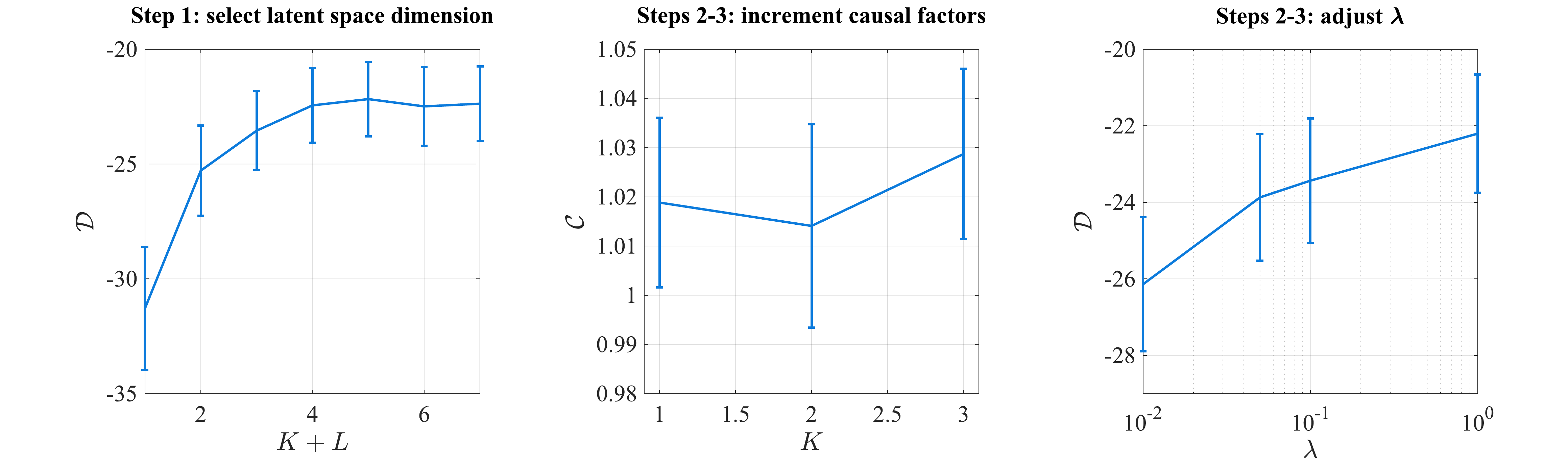}
    \caption{Partial details of parameter tuning procedure used to select $K$, $L$, and $\lambda$ for explaining a classifier trained on classes 0, 3, and 4 of the fashion MNIST dataset using Algorithm \ref{alg:parameter-selection}. \emph{Left:} in Step 1 we select the total number of latent factors $K+L$ needed to adequately represent the data distribution. \emph{Center:} In Steps 2-3 we iteratively convert noncausal latent factors to causal latent factors until $\CausalEffect$ (shown in nats) plateaus. \emph{Right:} After each increment of $K$, we adjust $\lambda$ to approximately achieve the value of $\DataFidelity$ from Step 1.}
    \label{fig:tuning-fmnist}
\end{figure}

The same VAE architecture described in Table \ref{tab:mnist-vae-details} was used to learn the generative map $g$. The objective \eqref{eq:objective} was maximized with 8000 training steps, batch size 32, and learning rate $10^{-4}$. At each training step, the causal influence term \eqref{eq:causal-effect} was estimated using the sampling procedure in Appendix \ref{sec:supp/sampling} with $N_\alpha = 100$ and $N_\beta = 25$. Using the parameter selection procedure in Algorithm \ref{alg:parameter-selection}, we selected $K = 2$, $L = 4$, and $\lambda = 0.05$; Figure \ref{fig:tuning-fmnist} shows intermediate results from this procedure.

Figure \ref{fig:fmnist-details} contains the complete results from the experiment in Figure \ref{fig:quantitative} (right), showing a complete visualization of the global explanation learned for this classifier.

\begin{figure}
    \centering
    \includegraphics{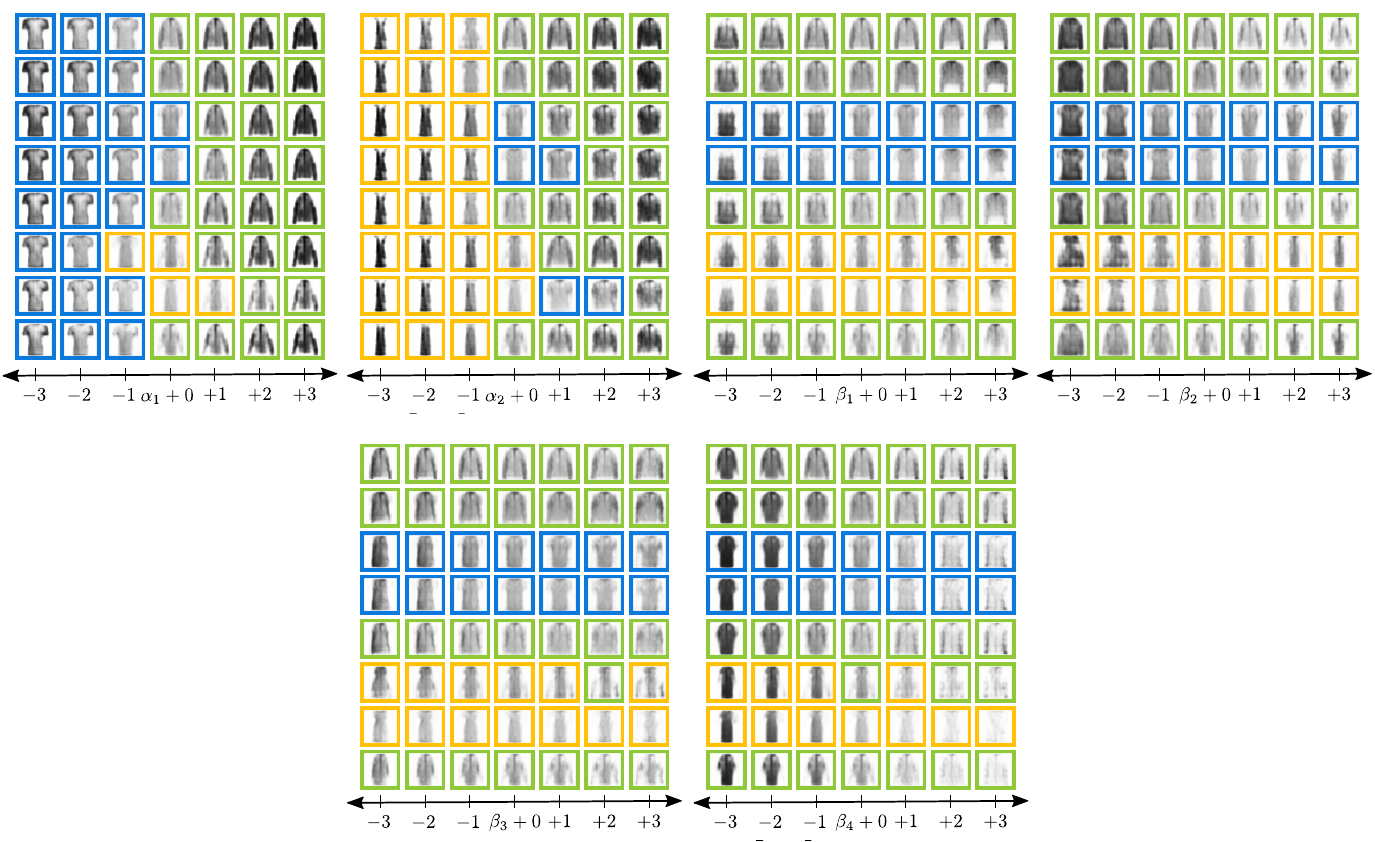}
    \caption{Visualizations of learned latent factors for Fashion MNIST classifier trained on classes `t-shirt-top,' `dress,' and `coat.' Images in the center column column of each grid are reconstructed samples from the validation set; moving left or right in each row shows $g(\alpha,\beta)$ as a single latent factor is varied. This plot shows the complete results from Figure \ref{fig:quantitative} (right); it includes sweeps for two additional samples and visualizations of all $K+L = 6$ latent factors.}
    \label{fig:fmnist-details}
\end{figure}
\clearpage

\section{Selecting generative model capacity}
\label{sec:supp/vae-capacity}
One practical decision to make when constructing explanations using our method is selecting the \emph{capacity} of the generative model $g$. Set too low, the generative model will have insufficient capacity to represent the data distribution and classifier, reducing the quality of the explanation. Set too high, the generative model will require a more time- and energy-intensive training procedure.

We can use results from \cite{feder1994relations} to bound the capacity mismatch of our explainer (i.e., explainer error in predicting classifier outputs) with the $I(\alpha;Y)$ part of our objective. In practice, this result means that a sufficiently large value of $I(\alpha;Y)$ serves as a certificate that the explainer complexity is sufficient to explain the classifier. Below, we show details of this analysis and empirically demonstrate how $I(\alpha;Y)$ can be used to select an architecture with sufficient capacity.

\subsection{$I(\alpha;Y)$ serves as a certificate of sufficient explainer capacity}

One reasonable measure for the quality of an explanation method is how accurately the black-box's classifications can be predicted from the explanation alone. If this prediction is accurate, then in a predictive sense the explanation has captured the relevant information about the classifier's behavior. In our model, the estimator that minimizes prediction error is the MAP estimate of the classifier's output from $p(Y \mid \alpha)$, where $p(Y \mid \alpha)$ is determined by marginalizing $p(Y \mid X)p(X \mid \alpha,\beta) p(\beta)$ over $\beta$ and $X$. As we show below, we can upper bound the error of this MAP estimator \emph{directly} by the causal effect $I(\alpha;Y)$ of $\alpha$ on $Y$, the quantity our method explicitly optimizes.

Specifically, let $\pi(Y \mid \alpha) \coloneqq \int_\alpha [1 - \max_{y}\, p(y \mid \alpha)]\,p(\alpha) d\alpha$ denote the expected error of this MAP estimator, averaged over the prior distribution on causal factor $\alpha$. From \cite{feder1994relations}, we have
\[\phi^*(\pi(Y \mid \alpha)) \le H(Y \mid \alpha),\]
where $H(Y \mid \alpha)$ is the conditional entropy of $Y$ given $\alpha$, and $\phi^*$ is a monotonically increasing, invertible function. Define $\widetilde{\phi} = (\phi^*)^{-1}$. Since $H(Y \mid \alpha) = H(Y) - I(Y;\alpha) \le \log M - I(Y;\alpha)$ \cite{cover2006elements}, we have
\begin{equation}
\pi(Y \mid \alpha) \le \widetilde{\phi}(\log_2 M - I(Y;\alpha))\label{eq:info-error-bound}
\end{equation}
where $I(Y;\alpha)$ is measured in bits.

If we take the prediction error of $Y$ from $\alpha$ as a measure of ``mismatch'' between our trained model and the blackbox classifier, \eqref{eq:info-error-bound} bounds this mismatch by the causal effect term in our objective and can serve as a certificate for having sufficient network capacity. For example, in 3-class Fashion MNIST ($M=3$), a value of $I(\alpha;Y)=1.03$ nats as in Figure \ref{fig:tuning-fmnist} results in a bound of $\pi(Y \mid \alpha) \le 0.05$. This translates to a MAP estimator of $Y$ from $\alpha$ having a black-box output prediction error of less than $5\%$, or that the causal factors can explain at least $95\%$ of the black-box's behavior. If this prediction accuracy is satisfactory, then the capacity of the generator $g$ is sufficient to learn appropriate latent factors and their mapping to the data space. If this prediction accuracy is not satisfactory, a class $G$ of generative models $g$ with higher capacity can be used. This will provide the model with more flexibility to optimize $I(\alpha;Y)$ and reduce prediction error.

\subsection{Empirical results}

\begin{figure}
    \centering
    \begin{tabular}{ccc}
        \includegraphics[width=0.48\textwidth]{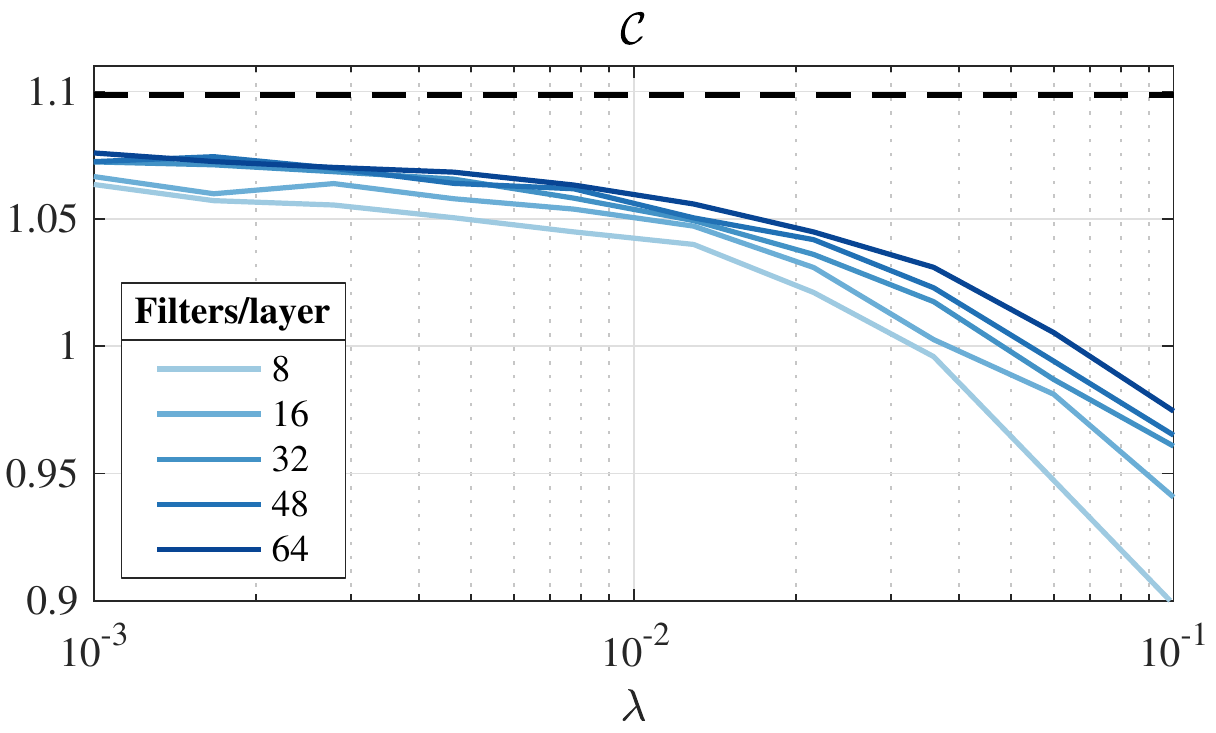} & \quad & \includegraphics[width=0.48\textwidth]{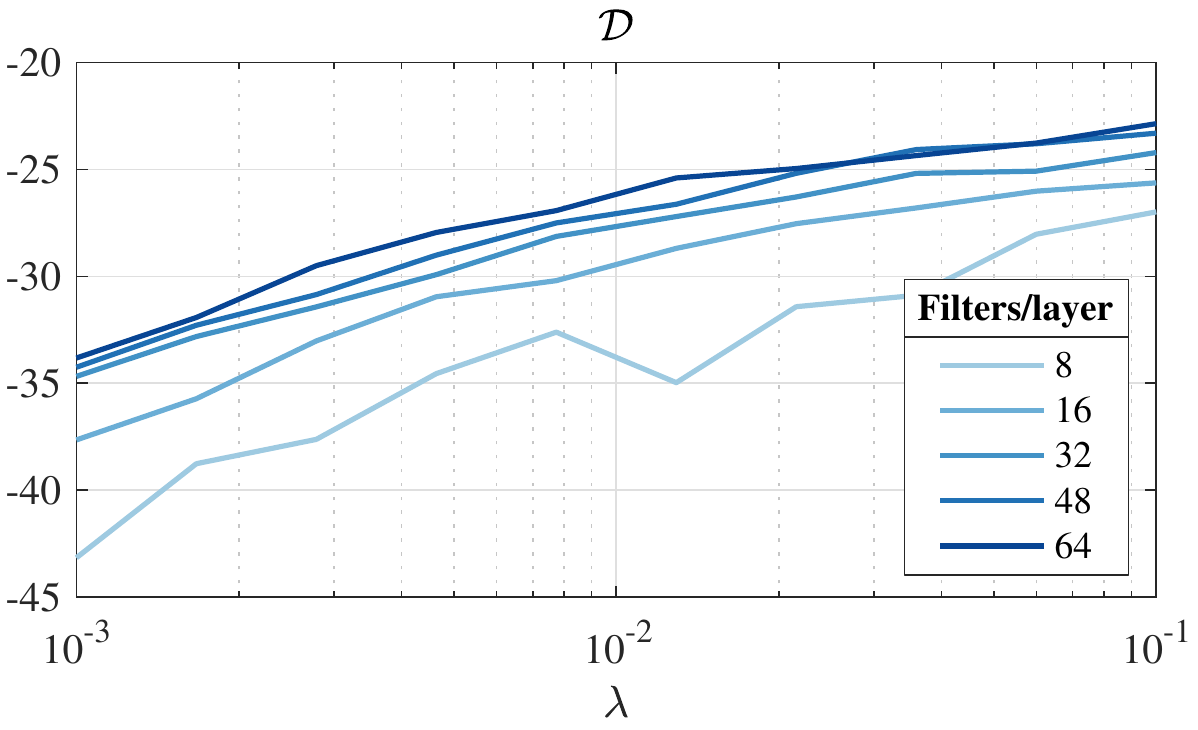} \\ ~~(a) & \quad & ~~(b)
    \end{tabular}
    \caption{Post-training value of the (a) causal effect and (b) data fidelity terms in the objective \eqref{eq:objective} for various capacities of VAE. The capacity is modified by changing the number of convolutional filters in each layer.}
    \label{fig:vae-capacity}
\end{figure}

The drawback of a VAE with insufficient capacity can be seen in Figure \ref{fig:vae-capacity}, which shows the causal effect and data fidelity terms of the objective \eqref{eq:objective} as the VAE capacity and tuning parameter $\lambda$ are modified. The VAE in each trial, which is applied to explain the 3-class Fashion-MNIST classifier considered in the quantitative experiments of Section \ref{sec:vae}, uses the architecture described in Table \ref{tab:mnist-vae-details} with $K = 2$ and $L = 4$ but with a variable number of convolutional filters in each layer of the encoder and decoder (see Table \ref{tab:vae-capacity-nparams}). The values of $\mathcal{C}$ and $\mathcal{D}$ reported in Figure \ref{fig:vae-capacity} are the average values in the last 50 training steps for each model. The dotted line in Figure \ref{fig:vae-capacity} represents the maximum achievable value of $I(\alpha; Y)$ in this three class setting, $\log(3) \approx 1.1$ nats.

\begin{table}
    \centering
    \begin{tabular}{|ccc|} 
        \hline
        Filters per convolutional layer & Encoder parameters & Decoder parameters \\ \hline
        8 & 6,916 & 4,937 \\
        16 & 17,916 & 13,969 \\
        32 & 52,204 & 44,321 \\
        48 & 102,876 & 91,057 \\
        64 & 169,932 & 154,177 \\
        \hline
    \end{tabular}
    \caption{Number of VAE parameters when $K+L = 6$.}
    \label{tab:vae-capacity-nparams}
\end{table}

As discussed in Section \ref{sec:methods/training-procedure}, the tuning parameter $\lambda$ dictates the trade-off between the objective's causal effect term $\mathcal{C}$ and data fidelity term $\mathcal{D}$. When the number of filters per layer is small, however, the model has insufficient capacity to simultaneously achieve a satisfactory value of both $\mathcal{C}$ and $\mathcal{D}$.

\begin{figure}
    \centering
    \begin{tabular}{ccc}
        \includegraphics[]{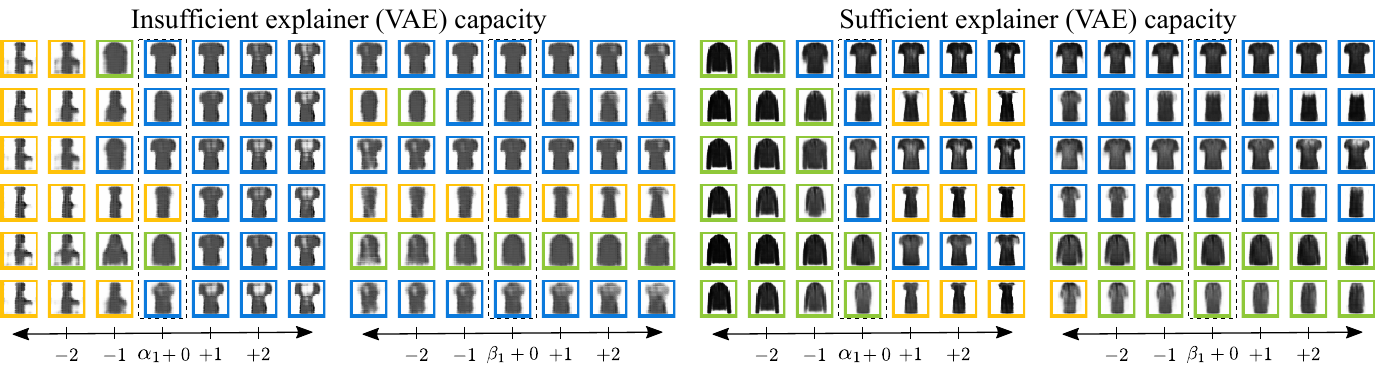} \\ (a) \hspace{1.2in} (b) \hspace{1.4in} (c) \hspace{1.2in} (d)
    \end{tabular}
    \caption{Global explanations with $\lambda \approx 0.013$ and varying VAE model capacity. (a--b) 8 filters per convolutional layer, defining a VAE with insufficient capacity to represent the data distribution. (c--d) 64 filters per convolutional layer, defining a VAE with sufficient capacity to represent the data distribution.}
    \label{fig:vae-capacity-explanations}
\end{figure}

Figure \ref{fig:vae-capacity-explanations} shows partial resulting explanations generated by an explainer with insufficient capacity (8 filters per convolutional layer; Figure \ref{fig:vae-capacity-explanations}(a--b)). Although the causal and noncausal factors do indeed roughly correspond to classifier-relevant and classifier-irrelevant data aspects in the sense that changing $\alpha_1$, but not $\beta_1$, produces changes in the classifier output, the effect of the model's limited ability to represent the data distribution is evident in the weak correspondence of the generated samples to training samples. Meanwhile, the same explanation generated by an explainer with sufficient capacity (64 filters per convolutional layer; Figure \ref{fig:vae-capacity-explanations}(c--d)) shows both effectively disentangled classifier-relevant/irrelevant data aspects and generated samples that appear to lie in the training data distribution.

\end{document}